\documentclass[dvipsnames]{article} %
\usepackage{colm2024_conference}

\usepackage{booktabs}
\usepackage{graphicx}
\usepackage{enumitem}
\usepackage{wrapfig}
\usepackage{algorithm}
\usepackage{algpseudocode}
\usepackage[utf8]{inputenc}
\DeclareUnicodeCharacter{FF5E}{\textasciitilde}

\usepackage{adjustbox}
\usepackage{amsmath}
\usepackage{colortbl}
\usepackage[utf8]{inputenc}
\definecolor{lightgray}{rgb}{0.9,0.9,0.9}
\usepackage{caption}
\usepackage{subcaption}
\usepackage[svgnames]{xcolor}
\usepackage{setspace}
\usepackage{url}
\usepackage{multirow}
\usepackage{colortbl}
\usepackage{tabularx}
\usepackage{blindtext}
\usepackage{pgfplots}
\pgfplotsset{compat=1.18} 
\usepackage{tikz}
\usetikzlibrary{er,positioning}
\usepackage{makecell}
\usepackage{tipa}
\usepackage{siunitx}
\usepackage{tocloft}
\usepackage{listings}
\usepackage{adjustbox}
\usepackage{xurl}
\usepackage{rotating}
\usepackage[normalem]{ulem}
\usepackage{multicol}
\usepackage{titlesec}
\usepackage{tcolorbox}
\tcbuselibrary{breakable,theorems}
\usepackage{transparent}
\usepackage{mathtools}

\usepackage{amsthm}

\usepackage{amsfonts}
\usepackage{etoc}
\usepackage{color}
\usepackage{autobreak}
\usepackage{soul}
\usepackage{amssymb}
\usepackage{bbding}

\newcommand{\blueplus}[1]{\small\textcolor{blue!70!black}{#1}}

\newtheorem{theorem}{Theorem}[section]
\newtcolorbox{codeblock}{
  colback=gray!10,   
  colframe=gray!50,  
  boxrule=0.5mm,     
  arc=2mm,           
  left=5pt,          
  top=5pt,          
  bottom=5pt,        
}

\newtcbtheorem{proposition}{Proposition}{
  fonttitle=\bfseries,
  fontupper=\normalfont,
  boxrule=0.5pt,
  left=3mm,      % 增加一点左边距
  right=4mm,
  top=2mm,
  bottom=2mm,
  clip upper=false,  % 默认是 false，但显式关闭更安全
  breakable=false,
  before upper=\setlength{\parindent}{1em}
}{prop}

\newtcbtheorem{definition}{Definition}{
  enhanced,
  colback=cyan!5!white,           % 背景色：极淡青
  colframe=cyan!45!blue!60,       % 边框色：青蓝混合
  fonttitle=\bfseries,            % 标题加粗
  fontupper=\normalfont,          % 正文正常字体（不斜体）
  arc=4mm,                        % �� 关键：整体圆角半径
  boxrule=1pt,
  drop shadow southwest,
  attach boxed title to top left={xshift=6mm, yshift=-2mm},
  boxed title style={
    colback=cyan,
    size=small,
    arc=2mm,                      % 标题块也圆角
    boxrule=0.8pt,
    left=2mm, right=2mm
  },
  before skip=10pt,
  after skip=10pt,
  left=5mm,
  right=5mm,
  top=3mm,
  bottom=3mm
}{cor} % 标签前缀，用于 \label{cor:xxx}

\useunder{\uline}{\ul}{}

\usepackage{amsmath,amsfonts,bm}

\def\eqref#1{equation~\ref{#1}}
\def\1{\bm{1}}

\DeclareMathAlphabet{\mathsfit}{\encodingdefault}{\sfdefault}{m}{sl}
\SetMathAlphabet{\mathsfit}{bold}{\encodingdefault}{\sfdefault}{bx}{n}

\renewcommand{\ghlink}{https://github.com/alibaba/ROLL}

\newcommand*\justify{%
  \fontdimen2\font=0.4em% interword space
  \fontdimen3\font=0.2em% interword stretch
  \fontdimen4\font=0.1em% interword shrink
  \fontdimen7\font=0.1em% extra space
  \hyphenchar\font=`\-% allowing hyphenation
}

\renewcommand{\texttt}[1]{%
  \begingroup
  \ttfamily
  \begingroup\lccode`~=`/\lowercase{\endgroup\def~}{/\discretionary{}{}{}}%
  \begingroup\lccode`~=`[\lowercase{\endgroup\def~}{[\discretionary{}{}{}}%
  \begingroup\lccode`~=`.\lowercase{\endgroup\def~}{.\discretionary{}{}{}}%
  \catcode`/=\active\catcode`[=\active\catcode`.=\active
  \justify\scantokens{#1\noexpand}%
  \endgroup
}

\definecolor{blueviolet}{RGB}{138,43,226}
\newtcolorbox{insightblock}{
  colback=blueviolet!5,   % 背景颜色，淡紫色
  colframe=blueviolet!50!black!50!,    % 边框颜色
  boxrule=0.5mm,       % 边框粗细
  arc=2mm,             % 边角弧度
  left=0pt,            % 左边距
  right=8pt,           % 右边距
  top=8pt,             % 上边距
  bottom=8pt,          % 下边距
}

\newtcolorbox{abstractbox}{
    colback=blue!5!white,     % 背景颜色（浅蓝色）
    frame empty,              % 边框颜色
    boxrule=1pt,              % 边框粗细
    arc=4mm,                  % 圆角
    left=8pt,                 % 左边距
    right=8pt,                % 右边距
    top=8pt,                  % 上边距
    bottom=8pt,                % 下边距
    opacityback=0.9
}

\newtcolorbox{examplebox}[1][]{
    breakable,
    colback=black!4,
    colframe=black!35,
    boxrule=0pt,
    sharp corners,
    boxsep=0pt,
    left=8pt,
    right=7pt,
    top=6pt,
    bottom=6pt,
    before upper={
        \if\relax\detokenize{#1}\relax
        \else
            {\small\bfseries\color{black!85} #1\par}
            \vspace{3pt}
        \fi
    },
    before skip=0.8\baselineskip,
    after skip=0.8\baselineskip,
    pad at break*=1.2mm
}

\title{How Does Reasoning Flow? Tracing Attention-Induced \\Information Flow for Targeted RL in LLMs}

\author{
\footnotesize
\textbf{Zhichen Dong}\textsuperscript{*123} \quad
\textbf{Yang Li}\textsuperscript{*12} \quad
\textbf{Yuhan Sun}\textsuperscript{1} \quad
\textbf{Weixun Wang}\textsuperscript{2} \quad
\textbf{Yijia Luo}\textsuperscript{2}\quad
\textbf{Zinian Peng}\textsuperscript{1} \\
\textbf{Taiheng Ye}\textsuperscript{1} \quad
\textbf{Chao Yang}\textsuperscript{3} \quad
\textbf{Wenbo Su}\textsuperscript{\textdagger2} \quad
\textbf{YuCheng}\textsuperscript{2} \quad
\textbf{Bo Zheng}\textsuperscript{2} \quad
\textbf{Junchi Yan}\textsuperscript{\textdagger1}
\\[2pt]
\scriptsize \textsuperscript{1}Shanghai Jiao Tong University \quad
\textsuperscript{2}Alibaba Group \quad
\textsuperscript{3}Shanghai Artificial Intelligence Laboratory
}

\newcommand{\mymodel}{FlowTracer}

\begin{document}

\maketitle
\begingroup
\renewcommand{\thefootnote}{\fnsymbol{footnote}}
\footnotetext[1]{Equal contribution.}
\footnotetext[2]{Correspondence to: Junchi Yan \texttt{yanjunchi@sjtu.edu.cn}; Wenbo Su \texttt{vincent.swb@alibaba-inc.com}.}
\endgroup

\begin{abstractbox}
\begin{center}
\textbf{\Large Abstract}
\end{center}
% Classical estimators such as generalized advantage estimation (GAE) depend on accurate per-state value predictions, which are difficult to learn for long, partially observed, and sparsely rewarded text generation. As a result, many
% a novel RL framework that identifies critical information paths through systematic analysis of the LLM's attention graph using the maximum-flow algorithm, and then uses the resulting structure to shape rewards
% and applying maximum-flow yields an information-flow backbone connecting the question to the answer. 
% (e.g., entropy or attention statistics)
% We model inter-token information flow as a directed network in which nodes correspond to tokens and edge capacities come from aggregated attention weights. 
Token-level credit assignment remains a key obstacle for reinforcement learning (RL) in large language models (LLMs), where RL recipes typically treat all tokens equally, failing to distinguish decisive reasoning steps from routine formatting or fluent filler. Recent attempts leverage model-internal signals to assign finer-grained credit, but these are often point-wise heuristics that ignore the global structure of information propagation. We propose FlowTracer, an RL framework that traces \emph{answer-targeted reasoning flow} on an attention-induced directed acyclic graph in which nodes correspond to tokens and edge capacities come from aggregated attention weights and derives token credit from this global structure. The edge capacities are reweighted to retain only the influence that can reach the answer region, while enforcing local flow conservation so intermediate tokens neither lose nor gain effective mass due to path length or irrelevant branches. On this graph, FlowTracer extracts an information-flow backbone connecting the question to the answer and scores tokens by flow throughput, revealing high-impact hubs and aggregation checkpoints that mediate long-range dependencies. These derived importances are used to shape token-level rewards, enabling learning signals to focus precisely on the tokens that route information toward (or away from) correct answers and delivering consistent performance gains across a range of reasoning tasks.

\end{abstractbox}

\begin{figure}[h]
    \centering
    \vspace{2pt}
    {\includegraphics[width=0.9\linewidth]{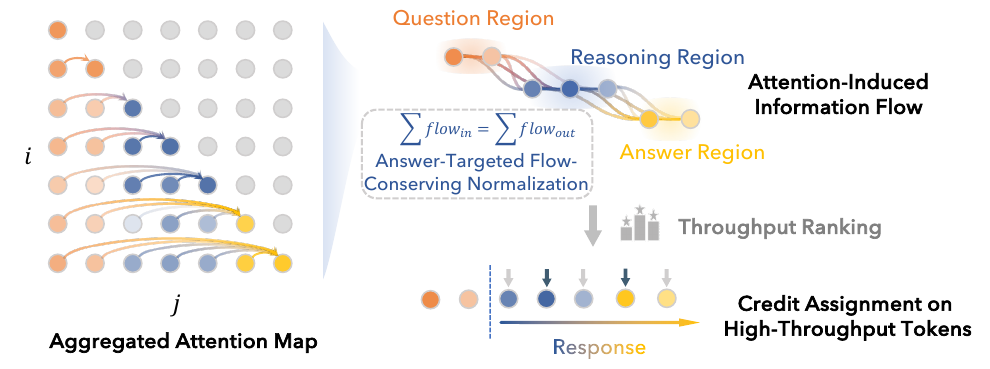}}
    \vspace{-5pt}
    \caption{\textbf{Overview of FlowTracer.} We build an attention-induced token graph, condition it on the answer to retain only routes that support the prediction, and normalize it to ensure locally consistent flow. Injecting flow from a super-source over the prompt to a super-sink over the answer yields a backbone of dominant multi-hop information paths. Token throughput on this backbone identifies key routing hubs for token-level credit assignment and reward shaping.}
    \label{fig:overview}
    \vspace{-5pt}
\end{figure}

% \vfill

% \newpage

\section{Introduction}

Reinforcement learning (RL) has become an important tool for training and aligning large language models (LLMs), and it has shown particular promise in eliciting and strengthening step-by-step reasoning for complex tasks~\citep{christiano2017deep, openai2024learning,guo2025deepseek, team2025kimi}. Among RL approaches, reinforcement learning with verifiable rewards (RLVR) offers a practical and scalable recipe when automatic checkers are available, providing reliable training signals from deterministic evaluators~\citep{shao2024deepseekmath,lambert2025tulu3pushingfrontiers}. This paradigm has driven substantial progress in mathematical problem solving~\citep{xin2024deepseekproverv15harnessingproofassistant, wang2024mathshepherdverifyreinforcellms,long2025reasoning}, code generation~\citep{hui2024qwen2, yang2025qwen3}, and more recently agentic settings~\citep{yang2024sweagentagentcomputerinterfacesenable, kimiteam2025kimik2openagentic,wang2025let}, where sequences of intermediate decisions unfold over long horizons and task success is precisely defined and automatically verifiable.

% \begin{figure}[tb]
%     \centering
%     \vspace{2pt}
%     {\includegraphics[width=0.8\linewidth]{figures/reasoning-flow.pdf}}
%     \vspace{-5pt}
%     \caption{\textbf{Overview of FlowTracer.} We build an attention-induced token graph, condition it on the answer to retain only routes that support the prediction, and normalize it to ensure locally consistent flow. Injecting flow from a super-source over the prompt to a super-sink over the answer yields a backbone of dominant multi-hop information paths. Token throughput on this backbone identifies key routing hubs for token-level credit assignment and reward shaping.}
%     \label{fig:overview}
%     \vspace{-5pt}
% \end{figure}

Despite these gains, a persistent limitation of RL for LLMs is token-level credit assignment. Autoregressive generation produces long trajectories with sparse, delayed supervision (e.g., correctness judged only at the end), so attributing success or failure to individual tokens is inherently challenging. While classical RL tools such as generalized advantage estimation (GAE)~\citep{schulman2018highdimensionalcontinuouscontrolusing} can in principle yield token-wise learning signals, they rely on accurate state-value estimates. In practice, estimating token-level value from the outside, i.e., based only on observed text states, is difficult in rich linguistic contexts, which makes attribution noisy and unstable. This often pushes methods toward coarse approximation designs that effectively weight tokens uniformly~\citep{shao2024deepseekmath, zheng2025groupsequencepolicyoptimization}, obscuring the difference between pivotal reasoning steps and incidental wording. On the other hand, recent works show that the model's own internal dynamics could provide additional reference signals for credit assignment~\citep{wang2025beyond, cui2025entropymechanismreinforcementlearning, li2025attention}, e.g., indicators of when information is being accumulated versus when the model is uncertain or confused. However, existing approaches typically operationalize such cues via point-wise heuristics (e.g., entropy~\citep{wang2025beyond, wang2025harnessing}, attention statistics~\citep{bogdan2025thought, li2025attention}) and largely overlook the global structure of how information propagates and is transformed across the full sequence.

We argue that resolving token-level credit assignment requires answering a more fundamental question: \emph{How does reasoning flow from the prompt to the final answer inside an LLM?} Rather than scoring tokens in isolation, we seek a structured, system-level characterization of how information is routed through intermediate tokens and long-range dependencies. Attention provides an explicit interaction graph among tokens, but raw attention weights alone do not directly reveal which multi-hop routes constitute the dominant backbone that carries task-relevant information. This motivates a graph-theoretic approach that can extract globally consistent paths and identify the true transit hubs that mediate prompt-to-answer transmission.

% To this end, we introduce \textbf{attention-based flow tracing}. We construct a directed flow network where each token is a node and each directed edge $(i \rightarrow j)$ is assigned a capacity equal to the normalized attention weight $A(i \rightarrow j)$, interpreted as the bandwidth of information transfer. To focus on end-to-end reasoning, we add a \emph{super source} connected to all question tokens and a \emph{super sink} connected to all answer tokens, both with sufficiently large capacities. We then compute the flow from source to sink, obtaining a set of high-capacity paths that collectively represent the primary information-flow backbone. Beyond extracting paths, we quantify token criticality by its \emph{flow throughput} (total incoming/outgoing flow), which highlights hubs and ``checkpoints'' that aggregate upstream evidence and redistribute it to downstream reasoning and final answer synthesis.

To this end, we propose {\mymodel}, a graph-based credit assignment method that extracts an \emph{answer-targeted reasoning backbone} from the model's attention structure and uses it to guide RL updates. We view the token sequence as an attention-induced directed acyclic graph (DAG), where each node is a token and each directed edge carries a nonnegative capacity derived from aggregated attention weights, interpreted as the strength of potential information transfer. Our goal is to quantify, for each token, how much of its influence is actually routed toward the answer region, rather than merely being locally salient.

A direct use of raw attention is insufficient for this purpose. Attention graphs contain numerous branches that never contribute to the final answer, and naive propagation on such graphs causes influence to be diluted along long paths or absorbed by irrelevant subtrees. As a result, early but decisive premises can be systematically under-credited, while late-stage tokens near the answer may receive disproportionate weight simply due to proximity. To avoid these structural biases, {\mymodel} introduces an explicit \emph{answer-conditioning} step on the attention graph: we first define an answer region (e.g., the final answer span) and compute a global reachability potential that measures how much downstream influence from each token can ultimately reach the answer. We then \emph{reweight} outgoing edge capacities to keep only the answer-relevant portion of influence and to satisfy a local flow-conservation property, ensuring that intermediate tokens neither lose nor gain effective mass due to path length, fan-out, or irrelevant branches. Tokens with zero answer-reachability are naturally filtered out, yielding a cleaned, target-conditioned graph that focuses analysis on the effective reasoning substructure.

On this target-conditioned graph, {\mymodel} performs a flow analysis between the prompt and the answer. By injecting unit flow from a super-source connected to prompt tokens and collecting flow at a super-sink connected to answer tokens, we recover an information-flow backbone that highlights the dominant multi-hop routes supporting the final prediction. The resulting token throughputs discover the reasoning pattern where high-throughput tokens act as transit hubs or aggregation checkpoints that mediate long-range dependencies (e.g., periodic consolidation near clause boundaries, repeated symbols, or variables that serve as cross-step anchors). By applying the resulting token throughputs, we plug the globally consistent notion of credits into RLVR via token-level reward shaping and loss reweighting, amplifying learning signals on high-impact routing tokens and suppressing updates on low-impact filler, thereby improving both learning efficiency and reasoning performance.

\section{Related Work}
\textbf{Deriving Optimization Signals From LLM Internal Dynamics.} Generative modeling has shown promise in broad scenarios~\citep{li2026generation,li2025unify,chen2026maskco,wangnexco}. While traditional optimization treats LLMs as black-box generators for end-to-end optimization, recent studies exploit internal computational processes to extract fine-grained signals~\citep{zou2023representation, chen2024unlockingcapabilitiesthoughtreasoning, dong2025emergentresponseplanningllms}. Research identifies specific components crucial for information handling, including task-specific attention heads~\citep{cabannes2024iteration, bertolazzi2025validationgapmechanisticanalysis}, context-aggregating receiver heads~\citep{ren2024identifyingsemanticinductionheads, zheng2024attention}, specialized functional layers~\citep{dumitru2024change}, and steering directions in representation space~\citep{burns2022discovering, venhoff2025understanding}. These insights enable LLM augmentation via representation editing~\citep{hernandez2023inspecting, turner2024steeringlanguagemodelsactivation}, side-route classifiers~\citep{li2022emergent, belrose2023eliciting, ji2024llm}, and component-focused training~\citep{zhao2025identifying}. Beyond static analysis, dynamic methods examine information propagation via attention~\citep{geva2023dissectingrecallfactualassociations, bogdan2025thought}. This reveals internal traits such as factual association~\citep{geva2023dissectingrecallfactualassociations, mohebbi2023quantifyingcontextmixingtransformers}, multi-path calculation~\citep{dutta2024think, ameisen2025circuit} and critical reasoning steps~\citep{bogdan2025thought, minegishi2025topology, qian2025demystifying}. However, raw attention is noisy and limited to single-step, point-wise influence. 
To this end, we apply a Doob-h-like transform to isolate the answer-relevant multi-hop reasoning backbone, yielding a robust signal to guide optimization. This view is also related to recent label-repurposing ideas, in which ground-truth labels are treated not merely as loss targets but as informative references that guide prediction learning~\citep{li2025generative}.
% To address this, we apply a Doob-h-like transform that isolates the answer-relevant multi-hop path, effectively extracting the backbone of correct reasoning while suppressing irrelevant noise.
% \textbf{Tracing Information Flow within LLMs.} 

\textbf{Credit Assignment for RL in LLMs.} 
RL is standard for LLM post-training~\citep{ziegler2019fine, ouyang2022training, openai2024gpt4technicalreport,lu2025part}, yet effective credit assignment remains an evolving challenge~\citep{lin2024critical, vassoyan2025ignore, liu2026shadow, dong2024attacks}. Off-policy RL~\citep{rafailov2023direct, meng2024simpo} aligns probabilities, while more prevailing on-policy RL  relies on sparse outcome rewards~\citep{bai2022training}. Explicit step-wise supervision (e.g., PRMs~\citep{lightman2023let} or MCTS~\citep{guan2025rstar}) mitigates sparsity but faces reward hacking and efficiency bottlenecks~\citep{cheng2025stop}. With the emergence of RLVR~\citep{shao2024deepseekmath, lambert2025tulu3pushingfrontiers}, Group Relative Policy Optimization (GRPO)~\citep{xin2024deepseekproverv15harnessingproofassistant} bypasses these limitations by using group advantage as an implicit signal to induce reasoning behaviors~\citep{yu2025dapo, yang2025qwen3}. Yet, this approach distributes credit uniformly, failing to distinguish critical steps from fillers~\citep{gandhi2025cognitive,li2025llms}. Recent works improve this with signals like entropy~\citep{wang2025beyond, wang2025harnessing, cheng2025reasoning, cui2025entropymechanismreinforcementlearning}, confidence or correlation~\citep{li2025confidence, zhou2024weak, nie2025text, zhou2024emulated}, gradients~\citep{ green2025contextual, li2025seek} and attention~\citep{li2025attention}, but these remain point-wise and ignore inter-token relationships. Our method models multi-hop influence within the reasoning flow to identify a token's true contribution, yielding a more precise credit assignment.

\section{Methodology}

We present {\mymodel}, a principled framework for token-level credit assignment in reinforcement learning (RL) for large language models (LLMs). Our approach leverages the global structure of attention-induced information propagation to identify which tokens genuinely contribute to the final answer. The method rests on three core ideas: (1) modeling reasoning as influence flow over a directed acyclic graph (DAG) induced by attention; (2) enforcing answer-targeted flow conservation through a Doob-h-like reweighting, followed by forward propagation to compute token-level throughput; and (3) using the resulting flow throughput to drive fine-grained, non-uniform policy updates. Crucially, we interpret attention weights as non-negative \textit{influence couplings} and exploit their algebraic structure to isolate only those paths that meaningfully contribute to the final output.

\begin{figure*}[tb!]
    \centering
    \begin{minipage}[c]{0.7\textwidth}
        \centering
        \begin{subfigure}{1\linewidth}
            \centering
            \includegraphics[width=0.95\linewidth, trim=0 0 0 12pt, clip]{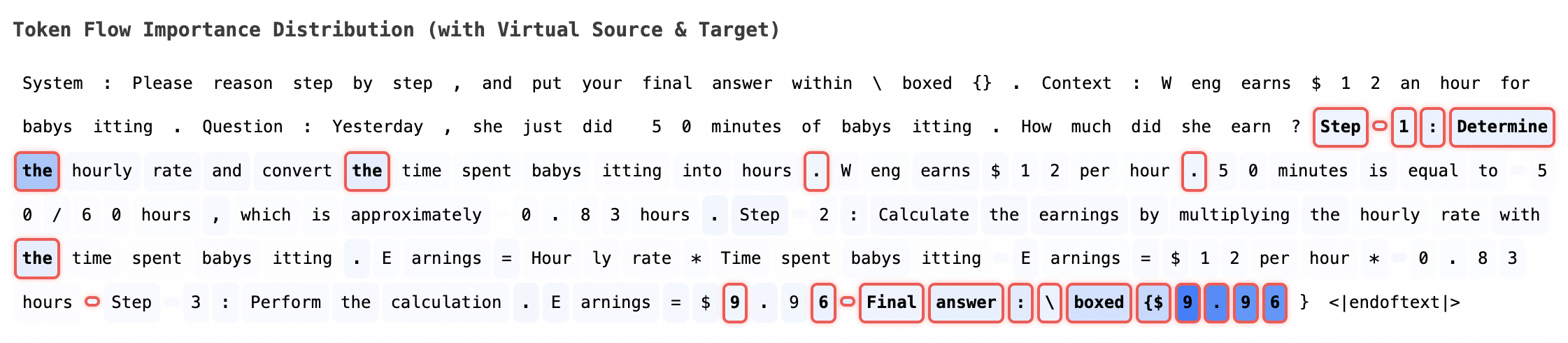}
            % \caption{High-flow tokens.}
        \end{subfigure}
        \\
        \begin{subfigure}{1\linewidth}
            \centering
            \includegraphics[width=1\linewidth, trim=0 0 0 8pt, clip]{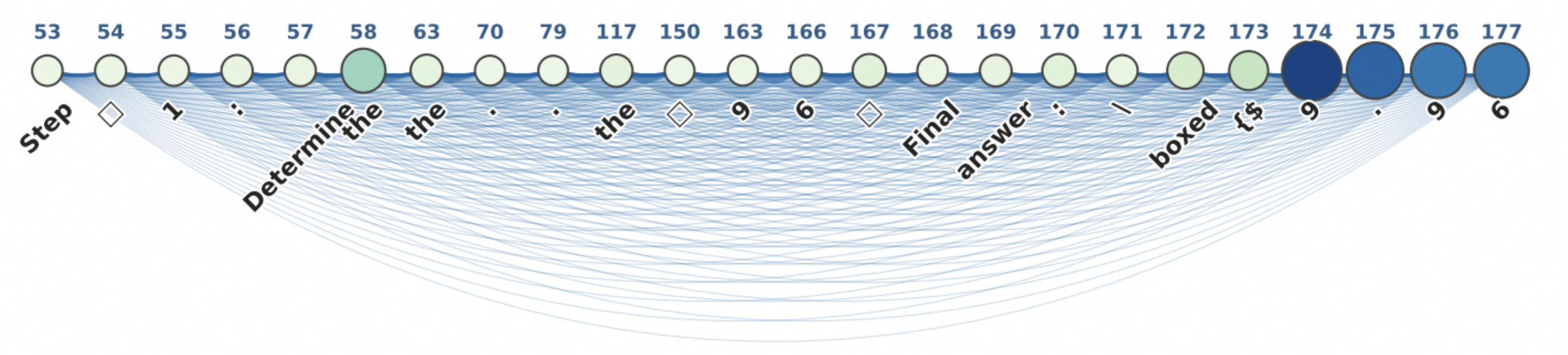}
            % \caption{High-flow tokens.}
        \end{subfigure}
        \vspace{-20pt}
        \caption{Answer-targeted token flow importance in Qwen3-4B. By modeling the generation as a DAG, we trace the influence flow from the prompt to the final answer. Darker nodes indicate higher flow throughput, identifying a reasoning backbone of decisive tokens over routine filler. This enables finer-grained token-level credit assignment for targeted RL.}\label{fig:analysis_flow_visualization}
        \vspace{-5pt}
    \end{minipage}
    \hfill
    \begin{minipage}[c]{0.28\textwidth}
        \centering
        \begin{subfigure}{0.95\linewidth}
            \centering
            \includegraphics[width=1\linewidth, trim=0 0 0 18pt, clip]{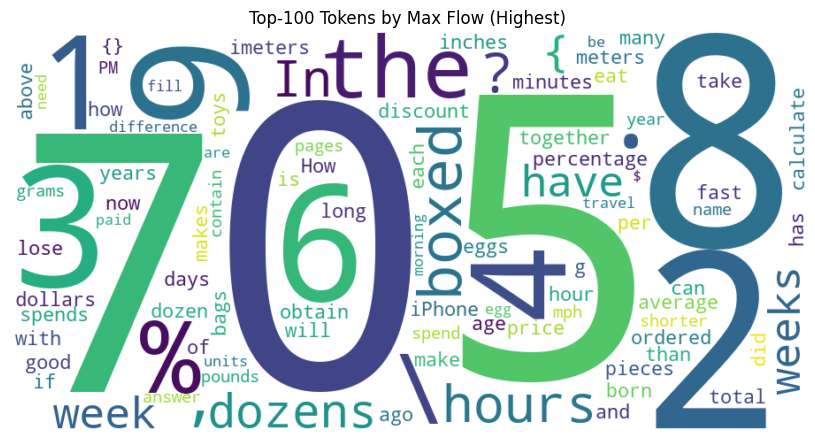}
            \caption{High-flow tokens.}
        \end{subfigure}
        \vspace{5pt} % 调整上下两张子图的间距
        \begin{subfigure}{0.95\linewidth}
            \centering
            \includegraphics[width=1\linewidth, trim=0 0 0 18pt, clip]{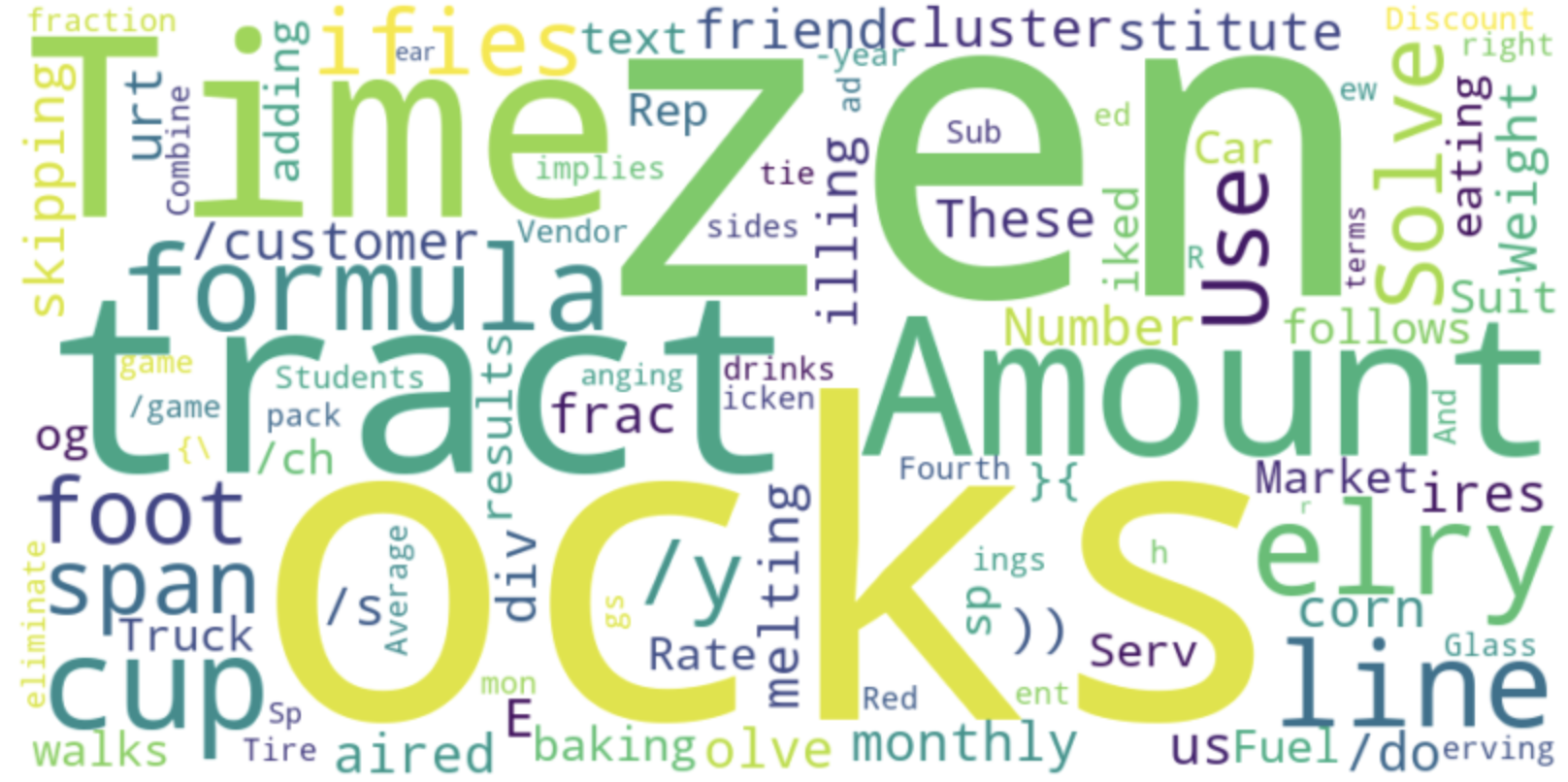}
            \caption{Low-flow tokens.} 
        \end{subfigure}
        \vspace{-5pt}
        \caption{\label{fig:wordcloud}Word cloud of high-flow and low-flow tokens.}
        \vspace{-8pt}
    \end{minipage}
\end{figure*}

\subsection{Answer-Targeted Influence Flow on Attention-Induced DAGs}
\label{subsec:flow_computation}

We formalize reasoning as a linear influence propagation process over an attention-induced directed acyclic graph (DAG), and compute a conserved flow that isolates influence paths that ultimately reach the answer. This consists of three stages: (1) constructing a raw influence graph from attention, (2) reweighting edges via a Doob-h-like transform to enforce answer-targeted conservation, and (3) propagating unit source flow forward to obtain token-level credit.

\paragraph{Raw Influence Graph from Attention.} Given an input-output sequence $(x_1, \dots, x_T)$ generated by an LLM, we construct a time-ordered DAG $\mathcal{G} = (V, E)$ where each token $x_i$ corresponds to a node $v_i \in V$, and a directed edge $E_{i \to k}$ exists for all $i < k$. The edge weight $W_{ik} \geq 0$ is derived from aggregated attention scores (e.g., mean over attention layers and heads) $a(x_k, x_i)$:
\begin{equation}
    W_{ik} \coloneqq a(x_k, x_i) \geq 0,
\end{equation}
We interpret $W_{ik}$ as the \textit{local influence coupling strength}: when one unit of influence departs from $x_i$, a fraction $W_{ik}$ is absorbed by $x_k$ and may be further propagated. Critically, we do not require $\sum_{k>i} W_{ik} = 1$; thus, the out-degree sum may exceed 1 (parallel broadcasting) or fall below 1 (signal attenuation), and $W$ defines a linear operator instead of a stochastic kernel.

While attention weights provide a natural signal for inter-token dependencies, using the raw attention graph $W$ directly for credit propagation suffers from two critical issues that undermine structured reasoning analysis: First, \textbf{attention violates local flow conservation}. Standard attention weights are normalized over \textit{source tokens} for each target, i.e., $\sum_{i < k} W_{ik} = 1$ (in-degree normalization). However, the out-degree sum $\sum_{k > i} W_{ik}$ is generally \textit{not equal to 1}; it can be arbitrarily large (if $x_i$ attends broadly) or small (if $x_i$ is ignored). Consequently, when influence flows forward from a node, it may be amplified or attenuated purely due to graph topology, not semantic importance. Second, and more fundamentally, \textbf{the raw graph contains extensive answer-irrelevant substructures}. A large fraction of attention flows into filler tokens, formatting markers, or intermediate hypotheses that are later discarded, i.e., paths that terminate before contributing to the final answer. Propagating credit through such dead-end branches leads to systematic underestimation of early critical premises (due to exponential decay over spurious paths) and overemphasis on tokens near the answer that merely restate conclusions.

\paragraph{Doob-h-Like Reweighting for Effective Influence.} To resolve both issues, we seek a reweighted graph $W'$ that (1) enforces \textit{local flow conservation} ($\sum_{k>i} W'_{ik} = 1$) and (2) restricts propagation to only those paths that ultimately reach the answer. We achieve this via a \textit{Doob-h-like reweighting}, where $h(i)$ denotes the total influence from node $i$ that successfully reaches the answer. We introduce a virtual sink node $s$ connected from all answer tokens $\mathcal{A}$ and then define a potential function $h(i)$ indicating the \textit{effective reachability} to the answer as the total influence-weighted path sum from node $i$ to $s$:
\begin{equation}
    h(s) = 1, \quad h(i) = \sum_{k > i} W_{ik} \, h(k), \quad \forall i \notin \mathcal{A}.
    \label{eq:potential}
\end{equation}
Then we set
\begin{equation}
    W'_{ik} \coloneqq \frac{W_{ik} \, h(k)}{h(i)}.
\end{equation}
This transformation guarantees a critical structural property:
\begin{theorem}[Local Flow Conservation]
\label{thm:conservation}
For any node $i$ with $h(i) > 0$, the reweighted outflow sums to unity:
\[
\sum_{k > i} W'_{ik} = 1.
\]
\end{theorem}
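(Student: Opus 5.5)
The argument is a direct substitution of the definition of $W'$ into the sum, closed off by the recursion \eqref{eq:potential} that defines $h$. Fix a node $i$ with $h(i)>0$. Since $h(i)$ does not depend on the summation index $k$, we may pull it out:
\[
\sum_{k>i} W'_{ik} \;=\; \sum_{k>i}\frac{W_{ik}\,h(k)}{h(i)} \;=\; \frac{1}{h(i)}\sum_{k>i} W_{ik}\,h(k).
\]
The hypothesis $h(i)>0$ is exactly what makes $W'_{ik}$ well defined. For nodes with $h(i)=0$ the reweighting is undefined, and these are the tokens the paper filters out.

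The second step is to identify the numerator $\sum_{k>i}W_{ik}h(k)$ with $h(i)$ itself. For a non-answer node $i\notin\mathcal{A}$ this is literally the defining recursion \eqref{eq:potential}, so the ratio equals $1$.

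The only care needed is the bookkeeping for answer tokens and the virtual sink $s$, and this is the step I expect to be the main (if mild) obstacle. I would make explicit that the successors of a node $i$ in the augmented graph include $s$ whenever $i\in\mathcal{A}$, with edge weight $W_{is}$. The natural reading of ``virtual sink node connected from all answer tokens'' is that $W_{is}=1$ and that answer tokens have no other outgoing edges, or equivalently that $h(i)=1$ for $i\in\mathcal{A}$. With $h(s)=1$, this gives
\[
\sum_{k>i}W'_{ik}=\frac{W_{is}h(s)}{h(i)}=1 .
\]
More generally, if answer tokens keep both their ordinary outgoing edges and the edge to $s$, I would simply declare that \eqref{eq:potential} is applied to them as well with the sum ranging over all successors including $s$. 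The identity then follows by the same one-line computation.

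Finally, I would note that $h$ is well defined because the graph is a time-ordered DAG. One computes $h$ by backward induction from $s$ down to index $1$, each $h(i)$ depending only on $h(k)$ for $k>i$. This justifies using the recursion in the second step, and completes the proof.
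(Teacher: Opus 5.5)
Your proposal is correct and is the same one-line argument the paper gives: substitute $W'_{ik}=W_{ik}h(k)/h(i)$, pull out $1/h(i)$, and recognize the remaining sum as $h(i)$ by the defining recursion in Eq.~\ref{eq:potential}. Your extra bookkeeping for answer tokens and the backward-induction remark on well-definedness of $h$ make explicit what the paper leaves implicit. One degenerate case is mentioned by neither of you: $i=s$ itself has $h(s)=1$ but no successors, so it must be excluded from the statement.
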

\begin{proof}
By definition of $h(i)$ in Eq.~\ref{eq:potential}, $\sum_{k > i} W'_{ik} = \sum_{k > i} \frac{W_{ik} h(k)}{h(i)} = \frac{1}{h(i)} \sum_{k > i} W_{ik} h(k) = \frac{h(i)}{h(i)} = 1.$
\end{proof}
Theorem~\ref{thm:conservation} ensures that effective influence is neither created nor destroyed at intermediate nodes, eliminating bias from graph topology. Simultaneously, by scaling edges with $h(k)/h(i)$, the reweighting automatically suppresses flow into dead-end branches (where $h(k) \approx 0$) and reallocates it to answer-reaching paths. Consequently, $W'_{ik}$ represents the \textit{fraction of node $i$'s total answer-reaching influence that is routed through successor $k$}, yielding a conserved, answer-targeted flow field suitable for structured credit assignment.

\paragraph{Forward Flow for Token-Level Throughput.} To obtain a token-level estimate of credit, we inject a unit of influence from the question and compute the resulting flow through the answer-targeted DAG. Specifically, we introduce a virtual source node  $\mathcal{S}$ connected to all input (question) tokens $\mathcal{Q}$ with initial flow $f(\mathcal{S} \to i) = 1/|\mathcal{Q}|$. We then propagate this influence forward over the reweighted, flow-conserving graph $W'$:
\begin{equation}
    f(k) = \sum_{i < k} f(i) \, W'_{ik}, \quad \forall k.
\end{equation}
The resulting $f(k)$ represents the \textit{share of effective influence} originating from the question and destined for the answer that passes through token $x_k$. The edge flow $\phi(i \to k) = f(i) W'_{ik}$ measures the importance of the dependency $x_i \to x_k$ in the reasoning backbone. Tokens with high total throughput $\tau(k) = f(k) + \sum_{j>k} \phi(k \to j)$ are identified as nodes that play a critical role in shaping the final answer. Consequently, we expect reinforcement learning signals to be most effectively propagated through these hubs, making them natural targets for fine-grained credit assignment and policy optimization.

\begin{table}[tb!]
\centering
\small
\renewcommand{\arraystretch}{1.2}
\caption{Causal intervention results on GSM8K. We perturb 20\% of tokens based on different selection methods and measure their impact on reasoning outcomes.}
\label{tab:perturbation_analysis}
\vspace{-5pt}
\begin{adjustbox}{max width=\linewidth}
\begin{tabular}{lcc}
\toprule
\textbf{Perturbation Target} & \textbf{Answer Change $\uparrow$} & \textbf{Correctness Reverse $\uparrow$} \\ 
\midrule
Random (20\%)                & 29.5\%                               & 4.5\%                                 \\
Low-flow (Bottom 20\%)       & 14.9\%                                & 0.5\%                                 \\
\textbf{High-flow (Top 20\%)} & \textbf{45.9\%}                      & \textbf{14.9\%}                        \\ 
\bottomrule
\end{tabular}
\end{adjustbox}
\vspace{-5pt}
\end{table}

% \subsection{High-Flow Tokens as the Reasoning Backbone}
\subsection{High-Flow Tokens as the Backbone of Reasoning}
\label{sec:method_analysis}

In this section, building on the formalized answer-targeted influence flow, we analyze the information flow within LLM reasoning. We conduct analytical experiments using the Qwen3-4B-Base~\citep{yang2025qwen3} model on the GSM8K~\citep{cobbe2021gsm8k} math reasoning dataset. Specifically, we \textbf{(1)} identify flow patterns characterized by high-flow information hubs (i.e. high-flow tokens) and \textbf{(2)} causally demonstrate their influence on the final answer. These findings support the use of high-flow tokens as fine-grained signals for policy optimization credit assignment.

% Building on the formalized answer-targeted influence flow, we analyze the topological structure of LLM reasoning. We use the Qwen3-4B-Base~\citep{yang2025qwen3} model on the GSM8K~\citep{cobbe2021gsm8k} math reasoning dataset to study the traits of internal reasoning flow. We identify specific patterns of high-throughput tokens that are causally critical to the reasoning process, motivating their use as fine-grained signals for policy optimization credit assignment.

\paragraph{High-flow information hubs appear as periodic bridges to form the reasoning backbone.} Our analysis reveals that information flow is not uniformly distributed; instead, it exhibits a ``backbone" structure where sparse, high-flow hubs emerge periodically. As illustrated in Fig.~\ref{fig:wordcloud}, tokens with high throughput $\tau(k)$ typically act as \textit{structural delimiters} (e.g., punctuation, newlines) or \textit{symbolic anchors} (e.g., recurrent variable names, mathematical operators). As exemplified in Fig.~\ref{fig:analysis_flow_visualization}, these tokens appear periodically to aggregate the context and broadcast it to future tokens, governing the local information flow until it is re-consolidated by another high-flow hub. Conversely, low-flow tokens generally consist of semantic filler, such as nouns and verbs, that support the sentence structure rather than the logical progression. 
This separation suggests that the model naturally decouples ``generating logic" (high-flow) from ``maintaining fluency" (low-flow).

\paragraph{Causal intervention validates: high-flow tokens act as backbones of reasoning, as blocking information aggregated at these points strongly influences the final answer.} To verify that high-flow hubs are causal drivers of reasoning, we conduct an intervention experiment on the GSM8K dataset. We identify high-flow and low-flow tokens, then perform perturbations by masking the attention of 20\% of the tokens (comparing high-flow, low-flow, and random selections) to prevent information from passing forward during regeneration. We measure the effect using the \textit{answer change rate} (e.g. 17 $\rightarrow$ 37) and \textit{correctness reverse rate} (e.g., correct $\rightarrow$ wrong). The results in Table~\ref{tab:perturbation_analysis} show a distinct divergence: \textbf{perturbing high-flow tokens leads to sharp answer changes and significant correctness shifts, whereas perturbing low-flow or random tokens alters the outcome minimally}. This confirms that high-flow tokens serve as delegates for critical information hubs; therefore, high-flow tokens act as key pivot points for the reasoning process, providing natural model-internal signals for fine-grained optimization.

\subsection{Credit Assignment via High-Flow Tokens}
% \subsection{Fine-Grained Credit Assignment via High-Flow Tokens}

Based on the findings above, we augment RL training via \textbf{fine-grained credit assignment} using the derived high-flow token backbone. Specifically, traditional RL methods for LLMs typically assign a uniform reward to every token, implicitly assuming equal contribution to the final answer. We challenge this by assigning higher credit to the specific tokens that drive the final output. This attributes the outcome (e.g., correct or incorrect) to the true driver tokens, enabling more efficient and accurate training. Below, we detail \textbf{(1)} our efficient implementation within RL frameworks and \textbf{(2)} our credit assignment strategy.

\paragraph{Efficient Integration into RL Frameworks.} Credit assignment occurs between the sampling and training stages. To compute flow for the generated responses within the RL loop, we perform a single additional forward pass to extract attention maps from middle layers (averaging layers $L/3 \sim 2L/3$). We then apply our answer-targeted flow computation (Section~\ref{subsec:flow_computation}) to identify the top 40\% high-flow tokens, which are used to weight policy updates. \textbf{Notably, this introduces only a marginal time overhead of 2.2\%--4.5\% (detailed in Sec.~\ref{sec:ablation})}, as the cost of a single forward pass is negligible compared to the thousands of autoregressive generation steps of sampling, leaving the time-consuming training stage. The specific layers and top-$k$ ratios are selected empirically (also demonstrated in Sec.~\ref{sec:ablation}), with full implementation details provided in Section~\ref{app:implementation}.

\begin{figure*}[tb!]
    \centering
        % 第一张图 (Top)
        % \includegraphics[width=0.8\linewidth, trim=0 40 0 0pt, clip]{figures/raw/math-qwen3-4B-1024-grpo-maxflow.png}
        % \makeatletter\def\@captype{figure}\makeatother
        % % \caption{Analysis of AIME performance.}
        % \label{fig:top_analysis}
        % \vspace{10pt} % 上下两图之间的间距
        % % 第二张图 (Bottom)
        % \includegraphics[width=0.8\linewidth, trim=0 0 0 0pt, clip]{figures/raw/math-qwen3-4B-8192-grpo-maxflow.png}
        % % \caption{Visualization of attention scores.}
        % \label{fig:bottom_analysis}
        \includegraphics[width=0.88\linewidth]{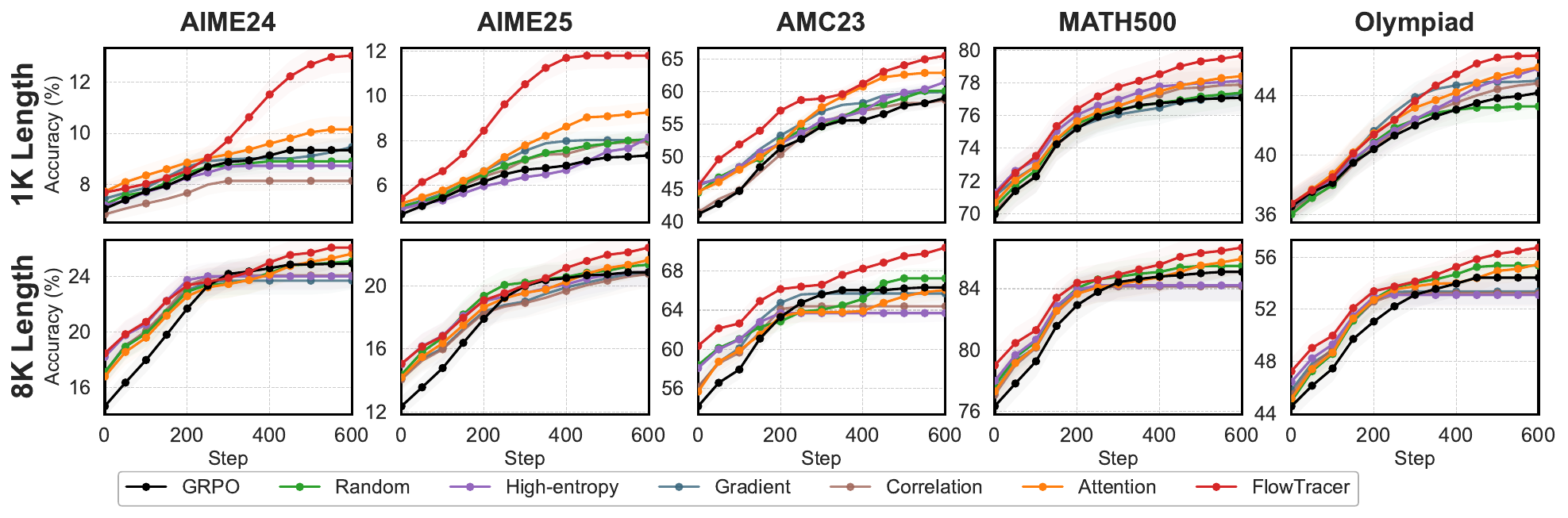}
        \vspace{-5pt}
        \caption{Performance curves of RL training on math reasoning based on the Qwen3-8B-Base model, with 1K and 8K length respectively.}\label{fig:math_curve}
        \vspace{-10pt}
\end{figure*}

\paragraph{Credit Assignment in RL.} To assign distinct credit (and consequently varying reward or loss) to each token based on its contribution to the final answer, we introduce a non-uniform scaling factor $\gamma_{i,t}$ into the GRPO loss:
\begin{equation}
\begin{adjustbox}{max width=0.99\linewidth}
$
    \begin{aligned}
        \mathcal{J}(\theta) = \mathbb{E}_{\mathbf{x} \sim \mathcal{D}} &\Bigg[ \frac{1}{G} \sum_{i=1}^G \frac{1}{N_i} \sum_{t=1}^{N_i} \gamma_{i,t} \cdot \min \Bigg( \frac{\pi_\theta(y_{i,t} \mid \mathbf{x}, \mathbf{y}_{i, <t})}{\pi_{\theta_{\text{old}}}(y_{i,t} \mid \mathbf{x}, \mathbf{y}_{i, <t})} \hat{A}i, \\
        & \text{clip}\left( \frac{\pi\theta(y_{i,t} \mid \mathbf{x}, \mathbf{y}_{i, <t})}{\pi{\theta_{\text{old}}}(y_{i,t} \mid \mathbf{x}, \mathbf{y}_{i, <t})}, 1-\epsilon, 1+\epsilon \right) \hat{A}_i \Bigg) \Bigg] \
    \end{aligned}
$
\end{adjustbox}
\end{equation}
In standard GRPO, $\gamma_{i,t}$ is typically set to 1 or acts as a uniform normalizer. In our approach, however, $\gamma_{i,t}$ is computed at the token level to quantify contribution, thereby scaling the encouragement (or discouragement) accordingly. Specifically, we identify a set of high-flow tokens, $\mathcal{T}_{\mathrm{high\_flow}}$, using the flow computation method described earlier. We then assign higher credit to tokens within this set to emphasize the reward or penalty they receive:
\begin{equation}
    \label{eq:credit_assignment}
    \gamma_t =
        \begin{cases}
        \gamma_{\mathrm{flow}} & \text{if } t \in \mathcal{T}_{\mathrm{high\_flow}} \\
        1 & \text{otherwise}
        \end{cases}
\end{equation}
where $\gamma{_\mathrm{flow}}=1.5$ denotes the emphasis factor. This coefficient ensures that policy updates are more aggressive for tokens that genuinely contribute to the answer, thereby enabling more efficient and interpretable RL training.

\section{Experiments}\label{sec:experiments}
\subsection{Experiment Settings}

\textbf{Backbone Models and Baselines.} We employ Qwen3-4B and Qwen3-8B~\citep{yang2025qwen3} as our primary backbone models, with Llama families including Llama-3.1-8B and Llama-3.2-3B~\citep{llama3modelcard} for supplementary results.
%{\mymodel} extends GRPO by leveraging flow-based analysis to assign token-wise weights for fine-grained advantage estimation. 
We compare our method against the standard GRPO baseline and five alternative credit prioritization strategies. In these experiments, we vary only the credit assignment criteria (i.e., which tokens receive higher credits) while keeping all other settings fixed. The baselines include: 
\textbf{1)} \textit{Random}, serving as a neutral lower bound; \textbf{2)} \textit{Entropy}~\citep{wang2025beyond}, which selects tokens with high entropy; \textbf{3)} \textit{Gradient}~\citep{green2025contextual}, which prioritizes tokens based on first-order gradient magnitude; \textbf{4)} \textit{Correlation}~\citep{nie2025text}, selecting tokens with strong mutual dependencies; and \textbf{5)} \textit{Attention}~\citep{li2025attention}, using maximum attention scores as a proxy for importance.

\textbf{Evaluation Benchmarks.} We conduct experiments on three distinct task categories: \textit{\textbf{(1)} Mathematical reasoning}, using five standard benchmarks: AIME24, AIME25, AMC23, MATH500~\citep{hendrycks2021measuring}, and OlympiadBench~\citep{he2024olympiadbench}; \textit{\textbf{(2)} Multi-domain question answering}, using \textit{CrossThinkQA}~\citep{akter2025nemotron}, which comprises multiple-choice questions spanning various disciplines; and \textit{\textbf{(3)} Domain-specific puzzle solving}, specifically the \textit{Countdown}~\citep{tinyzero} task, where the goal is to combine four numbers with arithmetic operations to reach a target value.

\textbf{Training Protocols.} During training, we use a global batch size of 512 and a micro-batch size of 32, with 16 gradient accumulation steps. The learning rate is set to $1 \times 10^{-6}$, and we exclude both KL divergence and entropy regularization terms from the loss function. For trajectory sampling, we set the temperature $T=0.99$, top-$p=1$, and top-$k=100$.
% To derive the fine-grained advantage $\tilde{A}_t = \gamma_t A_t$, we compute the token-wise weights $\gamma_t$ via a single additional forward pass using our flow-based variants. This pass is detached from the computational graph and introduces only a marginal time overhead of 2--3.3\% compared to the sampling phase (detailed in Section~\ref{exp:ablation}). 
The 3B and 4B models are trained on 8 GPUs for 500 steps, while the 8B models are trained on 16 GPUs for 600 steps. Further experimental settings are provided in Appendix~\ref{app:rl_details}.

\begin{table*}[!tb]
\centering
\caption{Results of math reasoning on Qwen3-Base models. We compare performance between 1K and 8K context lengths across various methods. \textbf{Bold} denotes the best results.}
\vspace{-5pt}
% Resize to fit textwidth comfortably
\begin{adjustbox}{max width=\textwidth}
\begin{tabular}{l cc @{\hspace{1.0em}} cc @{\hspace{1.0em}} cc @{\hspace{1.0em}} cc @{\hspace{1.0em}} cc @{\hspace{1.4em}} | cc}
\toprule
% 第一行表头：数据集名称
\multirow{2}{*}{\textbf{Method}} & 
\multicolumn{2}{c}{\textbf{AIME24}} & 
\multicolumn{2}{c}{\textbf{AIME25}} & 
\multicolumn{2}{c}{\textbf{AMC23}} & 
\multicolumn{2}{c}{\textbf{MATH500}} & 
\multicolumn{2}{c}{\textbf{Olympiad}} & 
\multicolumn{2}{c}{\textbf{Avg.}} \\
% 横线分隔，注意 cmidrule 的范围
\cmidrule(r){2-3} \cmidrule(lr){4-5} \cmidrule(lr){6-7} \cmidrule(lr){8-9} \cmidrule(lr){10-11} \cmidrule(l){12-13}
% 第二行表头：1K vs 8K
 & \textbf{1K} & \textbf{8K} & \textbf{1K} & \textbf{8K} & \textbf{1K} & \textbf{8K} & \textbf{1K} & \textbf{8K} & \textbf{1K} & \textbf{8K} & \textbf{1K} & \textbf{8K} \\
\midrule
\multicolumn{13}{c}{\textit{Qwen3-4B-Base}} \\
\midrule
GRPO & 8.4 & 19.5 & 5.2 & 16.1 & 55.1 & 57.6 & 74.2 & 81.0 & 42.8 & 49.9 & 37.1 & 44.8 \\
Random & 8.7 & 18.0 & 5.5 & 16.6 & 55.2 & 57.1 & 74.4 & 82.0 & 42.0 & 50.0 & 37.2 & 44.7 \\
High-entropy & 8.3 & 19.3 & 4.9 & 15.4 & 55.8 & 57.8 & 74.8 & 81.2 & 42.8 & 48.6 & 37.3 & 44.5 \\
Gradient & 8.6 & 20.3 & 4.2 & 19.6 & 53.2 & 57.8 & 74.1 & 82.3 & 43.3 & 51.3 & 36.7 & 46.3 \\
Correlation & 8.7 & 19.3 & 5.5 & 15.4 & 55.2 & 57.8 & 74.4 & 81.2 & 42.0 & 48.6 & 37.2 & 44.5 \\
Attention & 10.5 & 22.4 & 5.9 & 20.4 & 58.4 & 59.3 & 74.9 & 82.3 & 43.1 & 51.5 & 38.6 & 47.2 \\
\rowcolor{blue!7} {\mymodel} & \textbf{10.9 \blueplus{+2.5}} & \textbf{22.7 \blueplus{+3.2}} & \textbf{6.9 \blueplus{+1.7}} & \textbf{21.9 \blueplus{+5.8}} & \textbf{59.0 \blueplus{+3.9}} & \textbf{62.4 \blueplus{+4.8}} & \textbf{75.9 \blueplus{+1.7}} & \textbf{83.1 \blueplus{+2.1}} & \textbf{44.2 \blueplus{+1.4}} & \textbf{53.0 \blueplus{+3.1}} & \textbf{39.4 \blueplus{+2.2}} & \textbf{48.6 \blueplus{+3.8}} \\
\midrule
\multicolumn{13}{c}{\textit{Qwen3-8B-Base}} \\
\midrule
GRPO & 9.3 & 24.9 & 7.3 & 20.8 & 59.1 & 66.3 & 77.1 & 85.1 & 44.2 & 54.4 & 39.4 & 50.3 \\
Random & 8.9 & 25.1 & 8.1 & 21.3 & 60.1 & 67.2 & 77.4 & 85.5 & 43.3 & 55.3 & 39.6 & 50.9 \\
High-entropy & 8.7 & 24.0 & 8.1 & 20.9 & 61.5 & 63.7 & 78.1 & 84.2 & 45.9 & 53.1 & 40.5 & 49.2 \\
Gradient & 9.5 & 23.7 & 8.0 & 20.8 & 60.0 & 65.7 & 77.3 & 84.2 & 45.0 & 53.4 & 40.0 & 49.5 \\
Correlation & 8.2 & 24.1 & 7.9 & 20.7 & 58.8 & 64.4 & 77.9 & 84.2 & 44.8 & 53.2 & 39.5 & 49.3 \\
Attention & 10.1 & 25.6 & 9.3 & 21.6 & 62.9 & 66.1 & 78.4 & 85.9 & 45.9 & 55.5 & 41.3 & 50.9 \\
\rowcolor{blue!7} {\mymodel} & \textbf{13.0 \blueplus{+3.7}} & \textbf{26.1 \blueplus{+1.2}} & \textbf{11.8 \blueplus{+4.5}} & \textbf{22.4 \blueplus{+1.6}} & \textbf{65.6 \blueplus{+6.5}} & \textbf{70.4 \blueplus{+4.1}} & \textbf{79.7 \blueplus{+2.6}} & \textbf{86.7 \blueplus{+1.6}} & \textbf{46.7 \blueplus{+2.5}} & \textbf{56.7 \blueplus{+2.3}} & \textbf{43.4 \blueplus{+4.0}} & \textbf{52.5 \blueplus{+2.1}} \\
\bottomrule
\end{tabular}
\end{adjustbox}
\label{tab:math_reasoning}
\vspace{-10pt}
\end{table*}

\subsection{Main Results}

\textbf{{\mymodel} consistently enhances reasoning performance across standard mathematical benchmarks.} We first evaluate the effectiveness of our method on the 1K-length setting, which represents standard chain-of-thought reasoning scenarios. As shown in Table~\ref{tab:math_reasoning} and Fig.~\ref{fig:math_curve}, {\mymodel} consistently outperforms the GRPO baseline and other token-level credit assignment heuristics across both Qwen3-4B and Qwen3-8B. Specifically, for Qwen3-8B, our method achieves an average accuracy of 43.4\%, surpassing the GRPO baseline (39.4\%) by a substantial margin of 4.0\%. On the smaller Qwen3-4B model, our method maintains a solid lead, particularly on challenging competition-level datasets like AMC23 (+3.9\%) and AIME25 (+1.7\%). These results indicate that by modeling the global information flow rather than relying on point-wise signals, {\mymodel} can more accurately identify and reward the pivotal reasoning steps that lead to correct solutions.

% \begin{table}[!tb]
\begin{wrapfigure}{r}{0.42\linewidth}
\centering
\vspace{-10pt}
\makeatletter\def\@captype{table}\makeatother\caption{Results of the logical puzzle and question answering tasks.}
%Bold denotes the best results.
\vspace{-5pt}
\begin{adjustbox}{max width=1\linewidth}
\begin{tabular}{lcc}
\toprule
\textbf{Method} & \textbf{Countdown} & \textbf{CrossThinkQA} \\
\midrule
GRPO      & 52.6 & 48.0 \\
Random     &  55.0    & 47.6 \\
High-entropy   &  57.7    & 47.6 \\
Attention   &  60.4    & 49.6 \\
\rowcolor{blue!7} {\mymodel}  & \textbf{63.2 \blueplus{+10.6}} & \textbf{50.2 \blueplus{+2.2}} \\
\bottomrule
\end{tabular}
\end{adjustbox}
\label{tab:countdown&qa}
\vspace{-10pt}
\end{wrapfigure}
% \end{table}

\textbf{The advantage of {\mymodel} becomes more pronounced in longer-context (1K$\rightarrow$8K) reasoning scenarios.} As shown in Table~\ref{tab:math_reasoning} and Fig.~\ref{fig:math_curve}, we further investigate the method's capability in the 8K long-context scenario, a more challenging setting where reasoning signals are significantly sparser and precise credit assignment becomes increasingly critical. As the reasoning chain lengthens, standard RL methods typically suffer from credit dilution and increased noise. However, {\mymodel} demonstrates superior scalability. Notably, on Qwen3-4B, the performance gap between our method and the GRPO baseline widens from +2.2\% in the 1K setting to +3.8\% in the 8K setting, with a remarkable +5.8\% gain on AIME25. This trend suggests that as the solution space grows complex, {\mymodel} effectively locates key tokens while filtering fluent fillers, preserving the flow of credit to decisive logic steps.

\textbf{Beyond standard mathematics, {\mymodel} exhibits broad applicability across diverse reasoning paradigms.} To verify that {\mymodel} is not limited to mathematics-style reasoning, we further evaluate it on two distinct paradigms: \textit{Countdown} (constraint-satisfying symbolic planning) and \textit{CrossThinkQA} (multi-step logical question answering). All experiments use the same backbone Qwen3-4B-Base with 1K context window and the same RL recipe as in Table~\ref{tab:math_reasoning}. As shown in Table~\ref{tab:countdown&qa}, {\mymodel} consistently outperforms GRPO and point-wise heuristics, with a particularly large gain on Countdown (+10.6 absolute over GRPO), indicating that structured exploration is crucial for combinatorial planning. Meanwhile, the improvement on CrossThinkQA (+2.2) demonstrates that the same mechanism also benefits natural-language logical reasoning, supporting the generality of our approach beyond arithmetic domains.

% As shown in Table \ref{tab:countdown&qa}, to verify that our method is not limited to specific domains, we evaluated the framework on two distinct reasoning tasks: \textit{Countdown} and \textit{CrossThinkQA}. The results are compelling, particularly on the Countdown dataset, where {\mymodel} achieves a remarkable +9.4\% improvement over the GRPO baseline (62.0\% vs. 52.6\%). On CrossThinkQA, our method consistently outperforms point-wise heuristics (+2.2\%), confirming that the principle of ``answer-targeted flow" is a fundamental mechanism effective for both symbolic planning and natural language logic.

\textbf{Architecture generalization: {\mymodel} transfers beyond Qwen-style backbones to the Llama family.}
To examine whether our gains depend on a particular model architecture or tokenizer design, we further apply the same RL recipe to Llama-3.1-8B and Llama-3.2-3B. Since these models yield very low absolute scores on the hardest competition-level math benchmarks, we follow common practice and evaluate on a more suitable suite (AMC23, MATH500, Olympiad, MinervaMath, and GSM8K), while keeping the training protocol and compute budget unchanged. As shown in Table~\ref{tab:llama}, {\mymodel} consistently improves over the GRPO baseline and heuristic variants on both backbones: for Llama-3.1-8B, the average accuracy increases from 7.7\% to 9.1\% (+1.4), with notable gains on AMC23 (+2.7) and GSM8K (+2.4); for Llama-3.2-3B, the average rises from 4.8\% to 5.9\% (+1.1). These results demonstrate that our method is not tied to a specific architecture or model family, supporting its robustness as a general plug-in for reasoning-oriented RL training.

% \textbf{{\mymodel} is not limited to thinking models but also enhances standard non-thinking LLMs like the \textit{Llama} family.} To verify this, we extend our experiments to the Llama family (Llama-3.1-8B and Llama-3.2-3B). Given that these models exhibit lower baseline capabilities in complex mathematics, we replaced challenging datasets (e.g., AIME, AIME25) with simpler benchmarks like GSM8K and MinervaMath. As shown in Table~\ref{tab:llama}, {\mymodel} consistently improves performance over the GRPO baseline, raising the average accuracy from 7.7\% to 9.1\%. The gains are particularly notable on AMC23 (+2.7\%) and GSM8K (+2.4\%), where the method successfully guides the weaker model toward correct solutions. This confirms that our method does not rely on specialized ``thinking tokens'' or specific architectural priors, highlighting its potential as a universal plugin for enhancing reasoning in general-purpose models.

\begin{table}[tb!]
\centering
\caption{Math reasoning performance on Llama families.}
\vspace{-5pt}
\label{tab:llama}
\small
\begin{adjustbox}{max width=\linewidth}
\begin{tabular}{lcccccc}
\toprule
Method & AMC23 & MATH500 & Olympiad & Minerva & GSM8K & Avg \\
\midrule
\multicolumn{7}{c}{\textit{Llama-3.1-8B}} \\
\midrule
GRPO & 3.2 & 8.1 & 3.2 & 5.3 & 18.7 & 7.7 \\
Random & 4.0 & 7.8 & 3.1 & 5.2 & 17.6 & 7.5 \\
High-entropy & 4.0 & 7.8 & 3.1 & 5.2 & 17.6 & 7.5 \\
Attention & 4.9 & 8.2 & 3.4 & 5.9 & 19.4 & 8.4 \\
\rowcolor{blue!7} {\mymodel} & \textbf{5.9 \blueplus{+2.7}} & \textbf{9.0 \blueplus{+0.9}} & \textbf{3.7 \blueplus{+0.5}} & \textbf{6.0 \blueplus{+0.7}} & \textbf{21.1 \blueplus{+2.4}} & \textbf{9.1 \blueplus{+1.4}} \\
\midrule
\multicolumn{7}{c}{\textit{Llama-3.2-3B}} \\
\midrule
GRPO & 3.8 & 5.4 & 2.2 & 3.2 & 9.5 & 4.8 \\
Random & 3.5 & 5.8 & 2.1 & 3.2 & 9.1 & 4.7 \\
High-entropy & 3.8 & 5.2 & 2.3 & 3.1 & 10.0 & 4.9 \\
Attention & 4.0 & 5.9 & 2.4 & 2.9 & 10.4 & 5.1 \\
\rowcolor{blue!7} {\mymodel} & \textbf{5.1 \blueplus{+1.3}} & \textbf{6.7 \blueplus{+1.3}} & \textbf{2.8 \blueplus{+0.6}} & \textbf{3.4 \blueplus{+0.2}} & \textbf{11.4 \blueplus{+1.9}} & \textbf{5.9 \blueplus{+1.1}} \\
\bottomrule
\end{tabular}
\end{adjustbox}
% \vspace{-10pt}
\end{table}

\subsection{Ablation Study}
\label{sec:ablation}

\textbf{Ablating token selection validates the flow score and reveals an optimal signal density.} The core design choice in {\mymodel} is to assign extra credit only to a subset of tokens deemed most responsible for propagating reasoning-relevant information. Table~\ref{tab:k_ablation_performance} thus ablates (i) \emph{which} tokens are selected (Top-$k$ vs.\ Bottom-$k$ by flow score) and (ii) \emph{how many} tokens receive additional credit (selection ratio). The results show a clear separation: Top-$k$ selection consistently improves over GRPO, while prioritizing Bottom-$k$ tokens causes substantial degradation, confirming that the flow score effectively identifies decisive reasoning tokens rather than generic filler. We further observe a density trade-off: Top-20\% can under-cover the backbone and yields smaller gains, whereas Top-60\% introduces noisy or redundant tokens that dilute the signal. Across all benchmarks, Top-40\% provides the best overall performance, suggesting the most favorable signal-to-noise balance for credit assignment.

% As shown in Table~\ref{tab:k_ablation_performance}, we validate the efficacy of our flow-based scoring by sweeping across selection ratios and contrasting high-scoring (Top-$k$) versus low-scoring (Bottom-$k$) tokens. The large performance gap confirms that our method accurately isolates decisive reasoning from non-essential filler, as evidenced by the significant degradation when prioritizing Bottom-$k$ low-flow tokens. Furthermore, we observe a trade-off in selection density: Top-40\% strikes the optimal balance, avoiding the incomplete coverage of a stricter Top-20\% and the signal dilution of a looser Top-60\%.

% \textbf{Flow-based scores accurately distinguish critical reasoning steps from non-essential tokens.} 
%To verify that our flow scores truly capture token importance, we compare the performance of rewarding the top-$k$\% highest-scoring tokens against rewarding the bottom-$k$\% lowest-scoring ones. As expected, applying rewards to the top-$k$ tokens yields a strong performance of xx\%. In contrast, targeting the bottom-$k$ tokens results in a significant drop to xx\% (or leads to model collapse). This sharp contrast confirms that {\mymodel} successfully separates decisive reasoning steps (high flow) from trivial filler or noise (low flow), ensuring that the learning signal is directed toward the correct components rather than formatting artifacts.

\textbf{Reasoning flow concentrates in middle-layer attention.} To locate the most informative source for flow-based credit, we exploit the hierarchical organization of transformers and ablate which layers contribute attention maps. Following \citet{meng2023locatingeditingfactualassociations}, we partition Qwen3-4B into early (0--15), middle (15--25), and late (25--35) stages, plus an all-layer baseline, and compute credit using the attention maps averaged within each subset. As shown in Fig.~\ref{fig:qwen3-8B-1024-grpo_layer_ablation}, using middle-layer attention consistently yields the best performance, suggesting that the reasoning backbone is most salient in this intermediate regime. Importantly, the all-layer average underperforms the middle-layer setting, indicating that aggregating early/late layers can dilute the critical flow signal with less task-relevant interactions. We therefore adopt middle-layer attention by default, and leave finer-grained per-layer selection and mechanistic interpretation to future work.

\begin{figure*}[tb!]
     \centering
        \includegraphics[width=0.88\linewidth]{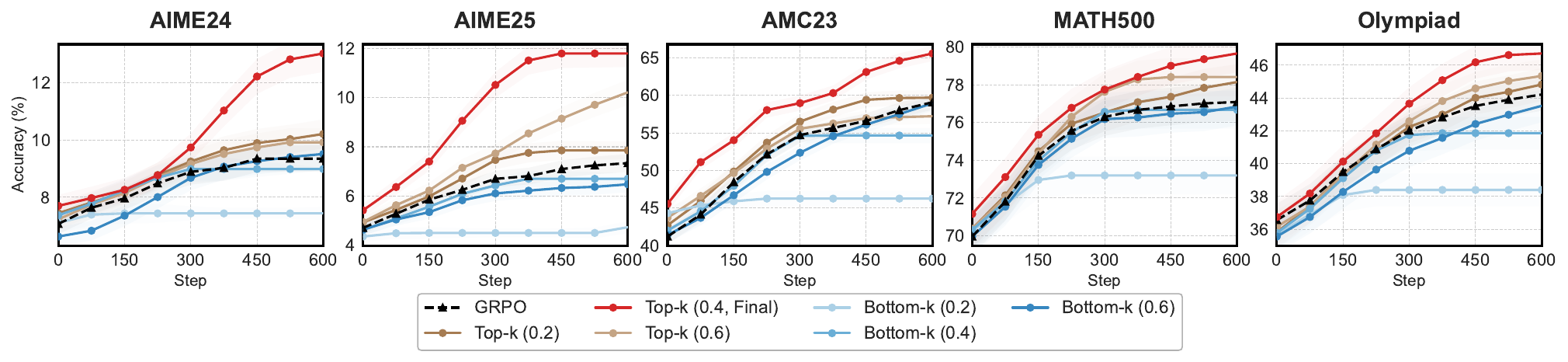}
        \vspace{-5pt}
        \caption{RL training curves of {\mymodel} under different token-selection ratios, comparing Top-k vs.\ Bottom-k (by flow score).}
        \label{fig:qwen3-8B-1024-grpo_topk_bottomk_grpo_ablation}
        \vspace{-10pt}
\end{figure*}

\begin{figure*}[tb!]
% \begin{wrapfigure}{r}{0.50\linewidth}%[!tb]
    \centering
    % \vspace{2pt}
    {\includegraphics[width=0.88\linewidth]{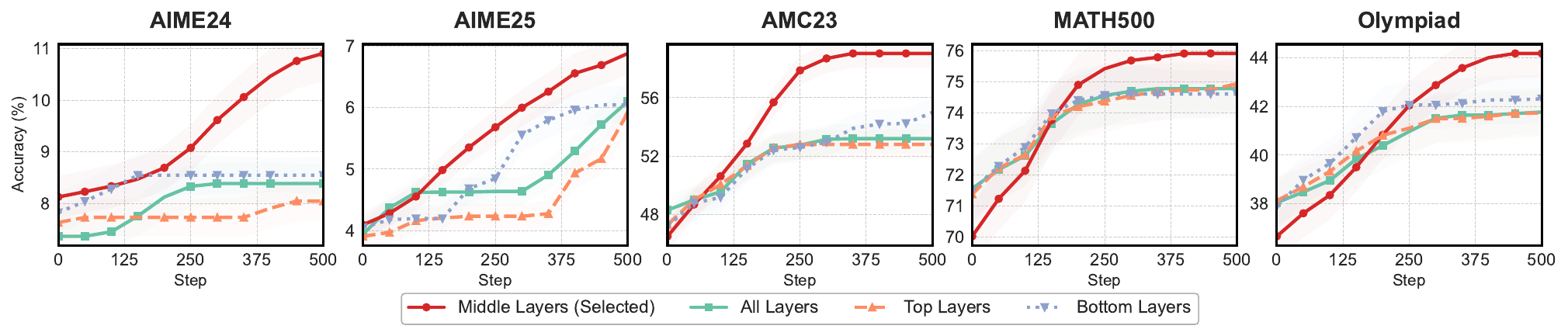}}
    \vspace{-5pt}
    \caption{{\mymodel} using attention signals from different layer ranges on Qwen3-4B with 1024 context length on math reasoning.}
    \label{fig:qwen3-8B-1024-grpo_layer_ablation}
    \vspace{-10pt}
% \end{wrapfigure}
\end{figure*}

\begin{table}[tb!]
    \centering
    % \begin{minipage}[c]{0.4\linewidth}
        \centering
        \caption{Performance comparison for different variants of {\mymodel}. The final chosen setting (Top-k with $k=0.4$) is highlighted in blue. Best performance in each column is in bold.}
        \vspace{-5pt}
        \label{tab:k_ablation_performance}
        \small
        \begin{adjustbox}{max width=\linewidth}
            \begin{tabular}{lcccccc}
            \toprule
            Method & AIME24 & AIME25 & AMC23 & MATH500 & Olympiad & Avg \\
            \midrule
            GRPO & 9.3 & 7.3 & 59.1 & 77.1 & 44.2 & 39.4 \\\midrule
            Top-20\% & 10.2 & 7.9 & 59.7 & 78.1 & 44.8 & 40.1 \\
            \rowcolor{blue!7} Top-40\% & \textbf{13.0} & \textbf{11.8} & \textbf{65.6} & \textbf{79.7} & \textbf{46.7} & \textbf{43.4} \\
            Top-60\% & 9.9 & 10.2 & 57.2 & 78.4 & 45.3 & 40.2 \\\midrule
            Bottom-20\% & 7.4 & 4.7 & 46.2 & 73.2 & 38.4 & 34.0 \\
            Bottom-40\% & 9.0 & 6.7 & 54.7 & 76.7 & 41.9 & 37.8 \\
            Bottom-60\% & 9.5 & 6.5 & 58.9 & 76.8 & 43.5 & 39.1 \\
            \bottomrule
            \end{tabular}
        \end{adjustbox}
        % \vspace{-10pt}
\end{table}

% To identify the optimal signal source given the hierarchical nature of LLMs, we partitioned Qwen3-4B into early (0--15), middle (15--25), and late (25--35) stages following \citet{meng2023locatingeditingfactualassociations}, alongside an all-layer baseline. For each setting, we average attention maps across the selected layers for flow-based credit assignment. 
% As shown in Fig.~\ref{fig:qwen3-8B-1024-grpo_layer_ablation}, the middle layers yield superior performance, suggesting the reasoning backbone is most distinct in this intermediate stage. Notably, this setting outperforms the all-layer average, indicating that signals from early and late layers may dilute the critical reasoning flow. Consequently, we empirically adopt middle-layer attention for our method, leaving finer-grained selection and mechanistic analysis for future work.

\textbf{{\mymodel} introduces only a marginal time overhead of 2.2\%--4.5\%.} Our credit assignment method is applied between rollout and policy update: after generating a response, we run \emph{one additional batched forward pass} to extract attention maps from mid transformer layers, compute the answer-targeted flow scores (Sec.~\ref{subsec:flow_computation}), and select the Top-40\% tokens to reweight the policy gradient. 
To quantify the computational overhead, we evaluate the computational cost of our method across different model sizes and context lengths in Table~\ref{tab:overhead}. The results show a relative overhead of 2.1\%--2.2\% for 1K contexts and 4.0\%--4.5\% for 8K contexts. Since {\mymodel} requires only a single additional batched forward pass over the generated sequence, its computational cost is overshadowed by the much larger expense of autoregressive token-by-token sampling.

\begin{table}[tb!]
    \centering
    \caption{Computational Overhead Analysis. Average training step time and {\mymodel}'s credit assignment time across different model sizes and context lengths. }
    \renewcommand{\arraystretch}{1.2}
    \label{tab:overhead}
    \vspace{-5pt}
    \begin{adjustbox}{max width=0.6\linewidth}
    \begin{tabular}{lc ccc}
        \toprule
        \textbf{Model} & \textbf{Context} & \textbf{Step Time} & \textbf{Flow Credit} & \textbf{Overhead} \\
        \midrule
        \multirow{2}{*}{Qwen3-4B} & 1K & 71.0 s & 1.57 s & 2.2\% \\
                                  & 8K & 189.7 s & 8.55 s & 4.5\% \\
        \midrule
        \multirow{2}{*}{Qwen3-8B} & 1K & 79.0 s & 1.66 s & 2.1\% \\
                                  & 8K & 275.3 s & 11.04 s & 4.0\% \\
        \bottomrule
    \end{tabular}
    \end{adjustbox}
    \vspace{-10pt}
\end{table}

\sisetup{
    detect-weight=true,
    detect-inline-weight=math,
    table-number-alignment=center,
    round-mode=places,
    round-precision=1
}

\newcommand{\groupheader}[1]{%
    \rowcolor{gray!10}
    \multicolumn{7}{l}{\textit{#1}} \\
}

\begin{table}[t]
\centering
\caption{
Ablations on continuous credit assignment using Qwen3-4B with 1K context length.
Bold denotes the best performance.
}
\vspace{-4pt}
\label{tab:continuous_credit_ablation}
\small
\setlength{\tabcolsep}{4.2pt}
\renewcommand{\arraystretch}{1.08}
\begin{adjustbox}{max width=\linewidth}
\begin{tabular}{
    l
    S[table-format=2.1]
    S[table-format=1.1]
    S[table-format=2.1]
    S[table-format=2.1]
    S[table-format=2.1]
    S[table-format=2.1]
}
\toprule
\textbf{Method} 
& {\textbf{AIME24}} 
& {\textbf{AIME25}} 
& {\textbf{AMC23}} 
& {\textbf{MATH500}} 
& {\textbf{Olympiad}} 
& {\textbf{Avg.}} \\
\midrule
GRPO 
& 8.4 & 5.2 & 55.1 & 74.2 & 42.8 & 37.1 \\
\midrule

\groupheader{Hard Top-40\% reweighting with different scaling factors ($\gamma_{\mathrm{flow}}=1.5$ chosen)}
$\gamma_{\mathrm{flow}}=0.5$ 
& 7.8 & 3.3 & 53.3 & 75.3 & 40.8 & 36.1 \\
$\gamma_{\mathrm{flow}}=1.2$ 
& 8.9 & 5.5 & 58.0 & 76.0 & 42.6 & 38.2 \\
\rowcolor{blue!7}
$\gamma_{\mathrm{flow}}=1.5$ 
& 10.9 & 6.9 & 59.0 & 75.9 & 44.2 & {\textbf{39.4}} \\
$\gamma_{\mathrm{flow}}=1.8$ 
& 8.7 & 4.3 & 57.9 & 74.9 & 43.3 & 37.8 \\
$\gamma_{\mathrm{flow}}=2.0$ 
& 5.9 & 4.4 & 49.5 & 69.9 & 36.9 & 33.3 \\
$\gamma_{\mathrm{flow}}=3.0$ 
& 6.0 & 3.9 & 41.0 & 64.1 & 33.5 & 29.7 \\
\midrule

\groupheader{vs. continuous reweighting}
Raw Flow 
& 6.8 & 3.1 & 37.9 & 63.0 & 32.7 & 28.7 \\
Tanh+Z-score 
& 6.6 & 3.8 & 42.7 & 74.0 & 42.2 & 33.9 \\
Sigmoid 
& 9.0 & 5.1 & 56.5 & 74.5 & 40.3 & 37.1 \\
MAD 
& 7.4 & 6.2 & 54.8 & 63.8 & 35.1 & 33.5 \\
Log1p 
& 9.4 & 5.3 & 57.0 & 75.4 & 40.1 & 37.4 \\
\midrule

\groupheader{vs. soft dynamic thresholding}
Soft 20\% flow 
& 7.4 & 3.5 & 46.0 & 71.5 & 36.8 & 33.0 \\
Soft 40\% flow 
& 9.1 & 4.8 & 51.8 & 73.6 & 41.4 & 36.1 \\
Soft 60\% flow 
& 8.5 & 4.3 & 53.0 & 73.8 & 42.2 & 36.4 \\
Soft 80\% flow 
& 8.3 & 4.2 & 55.0 & 72.5 & 40.8 & 36.2 \\
\bottomrule
\end{tabular}
\end{adjustbox}
\vspace{-12pt}
\end{table}

\section{Additional Analysis and Limitations}

\subsection{Why Not Continuous Credit Assignment?}

{\mymodel} converts token flow throughput into a \textbf{hard credit mask}: after computing token-level flow, it selects the Top-40\% tokens and multiplies their GRPO surrogate terms by $\gamma_{\mathrm{flow}}$.  
An alternative is to use flow continuously, either by \textbf{(i)} scaling each token directly with its raw or transformed flow value, or by \textbf{(ii)} selecting tokens until their cumulative flow mass reaches a prescribed threshold. 
However, we found these alternatives consistently less stable.
The key empirical reason is that \textbf{flow throughput is highly skewed}: a small number of tokens carry a large fraction of total flow, while most tokens have very small values. 
% Thus, the raw magnitude of flow is not well calibrated as a training weight, even though its ranking is informative.

Table~\ref{tab:continuous_credit_ablation} provides three insights:
\textbf{(1) Flow is more reliable as a ranking signal than as a calibrated continuous weight.} 
Raw-flow reweighting severely amplifies outliers, and  transformations such as Sigmoid and Log1p improve stability but still underperform hard Top-40\% reweighting. 
\textbf{(2) High-flow tokens should be emphasized moderately rather than exclusively.} 
The best result is obtained around $\gamma_{\mathrm{flow}}=1.5$; smaller values do not sufficiently distinguish the reasoning backbone, while larger values also degrade performance. 
\textbf{(3) Selecting by a fixed token ratio is more robust than selecting by cumulative flow mass.} 
Since flow mass is highly concentrated, cumulative thresholds either under-cover the backbone or dilute it with low-flow tokens.
\textbf{Overall, the hard Top-40\% rule acts as a simple denoising step}: it preserves the ordinal information in flow throughput while avoiding the instability of using its heavy-tailed magnitude directly.

\subsection{Answer-Format Robustness and Structural-vs-Semantic Contributions}
\label{sec:flow_signal_checks}

We further check whether the flow signal is sensitive to answer formatting or driven by specific token types. 
For answer formatting, although all main experiments define the answer region with \verb|\boxed{}|, we vary the \emph{answer form inside this region}. 
The Top-40\% high-flow set remains almost unchanged: its overlap with the answer-only format is $1.00$ for full-line and explanation-augmented answers, and $0.93$ for multiple-choice outputs. 
Thus, \textbf{the flow signal is robust to answer-format variations rather than tied to a brittle answer string.}

For token types, we split tokens into structural delimiters and semantic-content tokens, select the Top-40\% high-flow tokens within each subset, and apply the same credit rule. 
Structural delimiters recover most of the full gain ($38.8\%$ vs. $39.4\%$), whereas semantic-content tokens provide a smaller improvement ($37.6\%$).
Thus, \textbf{structural delimiters dominate the improvement, with semantic-content tokens adding a smaller complementary gain.}
Full results are in Appendix~\ref{app:answer_region_robustness}.

\subsection{Scope of Attention-Based Flow and Limitations}

{\mymodel} does not claim that attention fully explains LLM reasoning. 
Our use of attention is narrower: it provides an explicit token-to-token interaction graph from which answer-directed routing signals can be extracted. 
Raw attention contains both useful structure and noise, and {\mymodel} filters it into an answer-‑targeted, flow-‑conserving multi-hop backbone more useful for analysis and RL credit assignment than raw weights alone.
% This backbone is empirically useful, as shown by intervention analysis and downstream RL gains, but it does not capture all internal computation, such as FFN updates, residual streams, or representation-level transformations.

% Several limitations remain. 
Several limitations suggest future directions. 
First, {\mymodel} currently assumes a localized answer region; extending it to open-ended generation or tool-call trajectories requires more flexible target definitions. 
Second, outcome-only rewards cannot always separate locally useful intermediate reasoning from reasoning that supports a wrong final answer, making {\mymodel} complementary to PRM-style process supervision. 
Third, very long contexts may introduce noisier attention graphs, motivating adaptive flow extraction for 16K/24K or longer contexts.

\section{Conclusion}

 This work tries to address the critical challenge of credit assignment in RL4LLM by moving beyond uniform rewards and point-wise heuristics that treat tokens in isolation. We propose \textbf{{\mymodel}}, a framework that leverages the model's internal attention signals to reconstruct the global structure of information propagation.
By modeling the reasoning process as a directed flow network, we effectively identify the ``reasoning backbone", distinguishing decisive steps from routine filler based on their actual contribution to the answer. Our results demonstrate that shaping rewards with these flow-based importances enables the learning signal to focus precisely on high-impact tokens, delivering consistent performance gains across complex reasoning tasks. Finally, this work suggests that the internal geometry of attention within LLMs offers a powerful, structural signal for more efficient and interpretable model alignment.

\section*{Acknowledgments}
This work was in part supported by Scientific Research Innovation Capability Support Project for Young Faculty (U40) of the Ministry of Education of China (SRICSPYF-ZY2025019), NSFC 625B2119 and Alibaba Group.

\clearpage
\bibliography{biblio}
\bibliographystyle{colm2024_conference}

\clearpage
\appendix
\appendix

\section{Preliminaries}

\subsection{Information Propagation in Decoder-Only LLMs}
We consider an $L$-layer decoder-only LLM $\pi_\theta(\mathbf{y} \mid \mathbf{x})$ that generates a response $\mathbf{y}$ given a prompt $\mathbf{x}$. At any decoding step, the model processes the concatenated sequence $\mathcal{S} = (\mathbf{x}, \mathbf{y}_{<t})$ of length $N$. As the model processes tokens, it maintains an information stream encoded in  hidden states $\mathbf{H}^{(l)} \in \mathbb{R}^{N \times d}$, where $d$ is the model dimension.

The primary engine for routing information between tokens is the Multi-Head Self-Attention (MHSA) mechanism. Within a specific layer $l$ and head $h$, the input representation $\mathbf{H}^{(l-1)}$ is first projected into queries, keys, and values using learnable weight matrices $W_Q^{(h)}, W_K^{(h)}, W_V^{(h)} \in \mathbb{R}^{d \times d_k}$:
\begin{equation}
% \begin{aligned}
    Q^{(h)} = \mathbf{H}^{(l-1)}W_Q^{(h)},  K^{(h)} = \mathbf{H}^{(l-1)}W_K^{(h)}, V^{(h)} = \mathbf{H}^{(l-1)}W_V^{(h)}
% \end{aligned}
\end{equation}
The flow of information from previous tokens to the current token is determined by the attention matrix $A^{(h, l)} \in \mathbb{R}^{N \times N}$, computed as:
\begin{equation}
    A^{(h, l)} = \text{softmax}\left( \frac{Q^{(h)} (K^{(h)})^\top}{\sqrt{d_k}} + M \right)
\end{equation}
where $M$ is a causal mask (setting $M_{i,j} = -\infty$ for $j > i$) ensuring that token $i$ can only gather information from preceding tokens $j \le i$. The output of the attention head, $O^{(h)} = A^{(h, l)} V^{(h)}$, represents a weighted aggregation of values from the context. These head outputs are aggregated and processed via a Feed-Forward Network (FFN) with residual connections to yield a refined state $\mathbf{H}^{(l)}$ enriched by global context:
\begin{equation}
\begin{adjustbox}{max width=0.98\linewidth}
$
\begin{aligned}
    \mathbf{H}^{(l)} = \mathbf{H}^{(l-1)} + \text{FFN}( \mathbf{H}^{(l-1)}& 
    + \text{Concat}(O^{(1)}, \cdots, O^{(H)})W_O )
\end{aligned}
$
\end{adjustbox}
\end{equation}
Among these components, the attention matrix $A^{(h, l)}$ serves as the explicit control mechanism for data routing. The scalar entry $A_{i,j}^{(h, l)}$ directly quantifies the proportion of information token $i$ retrieves from token $j$, providing the most natural perspective for analyzing the topology of global information flow.

\subsection{Reinforcement Learning with Verifiable Rewards}

Reinforcement Learning with Verifiable Rewards (RLVR) optimizes an LLM policy $\pi_\theta$ to generate a solution $\mathbf{y}$ for a prompt $\mathbf{x}$, guided by a sparse scalar reward $r(\mathbf{x}, \mathbf{y})$ (e.g., correctness in math or coding). Standard approaches, such as Proximal Policy Optimization (PPO), rely on a value function critic to estimate the expected return, which incurs significant computational and memory overheads.

Group Relative Policy Optimization (GRPO) eliminates the need for a critic by using group statistics as the baseline. For each query, GRPO samples a group of $G$ outputs $\{\mathbf{y}_1, \dots, \mathbf{y}_G\}$ from the old policy $\pi_{\theta_{\text{old}}}$. The advantage for the $i$-th output is estimated as:
\begin{equation}
    \hat{A}_i = \frac{r_i - \text{mean}(\{r_j\})}{\text{std}(\{r_j\})}
\end{equation}
The policy is updated by maximizing a generalized surrogate objective that aggregates gradients over all tokens, incorporating a token-wise importance coefficient $\gamma_{i,t}$:
\begin{equation}
\begin{adjustbox}{max width=0.98\linewidth}
$
\begin{aligned}
\mathcal{J}(\theta) = \mathbb{E}_{\mathbf{x} \sim \mathcal{D}} &\Bigg[ \frac{1}{G} \sum_{i=1}^G \frac{1}{N_i} \sum_{t=1}^{N_i} \gamma_{i,t} \cdot \min \Bigg( \frac{\pi_\theta(y_{i,t} \mid \mathbf{x}, \mathbf{y}_{i, <t})}{\pi_{\theta_{\text{old}}}(y_{i,t} \mid \mathbf{x}, \mathbf{y}_{i, <t})} \hat{A}_i, \\
& \text{clip}\left( \frac{\pi_\theta(y_{i,t} \mid \mathbf{x}, \mathbf{y}_{i, <t})}{\pi_{\theta_{\text{old}}}(y_{i,t} \mid \mathbf{x}, \mathbf{y}_{i, <t})}, 1-\epsilon, 1+\epsilon \right) \hat{A}_i \Bigg) \Bigg] \\
\end{aligned}
$
\end{adjustbox}
\end{equation}
In standard GRPO, the importance coefficient is set uniformly as $\gamma_{i,t} = 1$,  assigning uniform credit to every token regardless of its semantic role. Recent studies suggest that RLVR performance can be significantly improved by leveraging fine-grained, token-wise signals that distinguish critical reasoning steps from trivial tokens, a capability we explore in subsequent sections.
% While some variants normalize by sequence length ($\gamma_{i,t} = 1/N_i$), both approaches essentially

\section{Model and Dataset Specification}
% \subsection{Model and Dataset Specification}
% The following table lists the models and their corresponding links.

% \begin{longtable}{p{7cm}p{9cm}}
% % \begin{longtable}{@{}p{7cm}p{9cm}@{}}
% \toprule
% \textbf{Models} & \textbf{Links} \\
% \midrule
% \texttt{Qwen-3-4B}~\citep{yang2025qwen3} & \url{https://huggingface.co/Qwen/Qwen-3-4B} \\
% \texttt{Qwen-3-8B}~\citep{yang2025qwen3} & \url{https://huggingface.co/Qwen/Qwen-3-8B} \\
% \texttt{Llama-3.1-8B}~\citep{llama3modelcard} & \url{https://huggingface.co/meta-llama/Llama-3.1-8B} \\
% \texttt{Llama-3.2-3B}~\citep{llama3modelcard} & \url{https://huggingface.co/meta-llama/Llama-3.2-3B} \\
% \bottomrule
% \end{longtable}
\textbf{Model Specification.} In this work, we take \texttt{Qwen3-4B} and \texttt{Qwen3-8B}~\citep{yang2025qwen3} as the main backbones for our primary experiments. To further validate the generalization of our method, we also provide supplementary experimental results using \texttt{Llama-3.1-8B} and \texttt{Llama-3.2-3B}~\citep{llama3modelcard}. The detailed specifications, corresponding references, and access links for these models are listed in Table~\ref{tab:model_specs}.

\vspace{1em} % 添加一点垂直间距，让表格和文字不那么挤
\noindent % 防止首行缩进
\begin{adjustbox}{max width=0.98\linewidth}
\begin{minipage}{\linewidth} % minipage 保证表格和标题在一起，且不浮动
    \centering
    \small
    \captionof{table}{Model Specifications and Links} % 使用 captionof 生成标题
    \label{tab:model_specs}
    
    % 使用 tabularx 自动撑满当前栏宽 (linewidth)
    % l 列为模型名，X 列为 URL (自动换行)
    \begin{tabularx}{\linewidth}{@{} l X @{}} 
    \toprule
    \textbf{Models} & \textbf{Links} \\
    \midrule
    \texttt{Qwen3-4B} & \url{https://huggingface.co/Qwen/Qwen3-4B} \\
    \texttt{Qwen3-8B} & \url{https://huggingface.co/Qwen/Qwen3-8B} \\
    \texttt{Llama-3.1-8B} & \url{https://huggingface.co/meta-llama/Llama-3.1-8B} \\
    \texttt{Llama-3.2-3B} & \url{https://huggingface.co/meta-llama/Llama-3.2-3B} \\
    \bottomrule
    \end{tabularx}
\end{minipage}
\end{adjustbox}
\vspace{1em} % 表格后的间距

\textbf{Dataset Specification and Details.}
We evaluate {\mymodel} across three distinct categories of tasks: challenging mathematical reasoning, multi-domain question answering, and domain-specific puzzle solving. The detailed specifications and access links for datasets are listed in Table~\ref{tab:dataset_specs}.

\vspace{1em}
\noindent
\begin{adjustbox}{max width=0.98\linewidth}
\begin{minipage}{\linewidth}
    \centering
    \captionof{table}{Dataset Specifications and Links}
    \label{tab:dataset_specs}
    
    \small % 建议用小一号字体，防止内容挤出去
    \begin{tabularx}{\linewidth}{@{} p{3.7cm} X @{}}
    \toprule
    \textbf{Dataset} & \textbf{Link} \\
    \midrule
    
    % --- Category 1: Math ---
    % 使用 multicolumn 跨越 2 列，居中或左对齐均可，这里用 italic 加粗
    \multicolumn{2}{c}{\small \textit{\textbf{Challenging Mathematical Reasoning}}} \\
    \midrule
    AIME 24 \& 25 & \url{https://artofproblemsolving.com/wiki/index.php/AIME_Problems_and_Solutions} \\
    AMC 23 & \url{https://artofproblemsolving.com/wiki/index.php/AMC_10_Problems_and_Solutions} \\
    MATH500~\citep{hendrycks2021measuring} & \url{https://github.com/hendrycks/math} \\
    OlympiadBench~\citep{he2024olympiadbench} & \url{https://huggingface.co/datasets/HPDL/OlympiadBench} \\
    GSM8K~\citep{cobbe2021gsm8k} & \url{https://github.com/openai/grade-school-math} \\
    MinervaMath~\citep{hendrycks2021measuring} & \url{https://huggingface.co/datasets/UCLA-AGI/Minerva_Math} \\
    
    % --- Category 2: QA ---
    \midrule
    \multicolumn{2}{c}{\small \textit{\textbf{Multi-domain Question Answering}}} \\
    \midrule
    CrossThinkQA~\citep{akter2025nemotron} & \url{https://huggingface.co/datasets/nvidia/CrossThink-QA} \\
    
    % --- Category 3: Puzzle ---
    \midrule
    \multicolumn{2}{c}{\small \textit{\textbf{Domain-specific Puzzle Solving}}} \\
    \midrule
    Countdown~\citep{tinyzero} & \url{https://github.com/Jiayi-Pan/TinyZero} \\
    
    \bottomrule
    \end{tabularx}
\end{minipage}
\end{adjustbox}
\vspace{1em}

The details of each dataset are as follows:

\paragraph{1) Challenging Mathematical Reasoning.}
We utilize a suite of benchmarks ranging from grade-school level to Olympiad-level problems to assess the model's mathematical reasoning capabilities.
\begin{itemize}
    \item \textbf{AIME 24 \& AIME 25:} These datasets consist of problems from the 2024 and 2025 American Invitational Mathematics Examinations (AIME). As recent competitions, they serve as high-quality benchmarks to evaluate the model's performance on hard, out-of-distribution mathematical problems that require creative problem-solving skills.
    \item \textbf{AMC 23:} The American Mathematics Competitions (AMC) benchmark includes problems designed for high school students. It tests mathematical problem-solving skills in algebra, geometry, number theory, and combinatorics.
    \item \textbf{MATH500:}~\citep{hendrycks2021measuring} This is a widely adopted subset of the MATH benchmark, consisting of 500 selected problems from the test set. It covers seven mathematical disciplines and is used to evaluate the model's ability to handle complex, multi-step reasoning tasks.
    \item \textbf{OlympiadBench:}~\citep{he2024olympiadbench} A comprehensive benchmark featuring Olympiad-level mathematics and physics problems. It includes open-ended questions and theorem proving tasks sourced from international competitions and the Chinese Gaokao, representing an extremely high difficulty level.
    \item \textbf{GSM8K:}~\citep{cobbe2021gsm8k} The Grade School Math 8K dataset consists of high-quality grade school math word problems. It is used to assess the model's fundamental ability to perform multi-step arithmetic reasoning.
    \item \textbf{MinervaMath:}~\citep{hendrycks2021measuring} A dataset focused on advanced quantitative reasoning, often comprising undergraduate-level science and mathematics problems. It evaluates the model's capability to apply mathematical concepts in broader scientific contexts.
\end{itemize}

\paragraph{2) Multi-domain Question Answering.}
\begin{itemize}
    \item \textbf{CrossThinkQA:}~\citep{akter2025nemotron} A multi-domain dataset designed to evaluate cross-disciplinary reasoning. It comprises multiple-choice questions spanning diverse fields such as science, humanities, and social sciences. The task requires the model to synthesize knowledge from different domains to select the correct answer, testing its general world knowledge and logical deduction abilities.
\end{itemize}

\paragraph{3) Domain-specific Puzzle Solving.}
\begin{itemize}
    \item \textbf{Countdown:}~\citep{tinyzero} Based on the classic numbers game, this task requires the model to combine four randomly selected numbers using basic arithmetic operations ($+, -, \times, \div$) to exactly equal a target number. This task evaluates the model's search and planning capabilities within a constrained discrete action space.
\end{itemize}

\section{Implementation Details for RL Experiments in Sec.~\ref{sec:experiments}}\label{app:implementation}

\subsection{RL Framework} 
We implement our RL experiments on top of the ROLL framework~\citep{wang2025reinforcement}, which is designed to address the computational asymmetry between rollout generation and policy optimization. Specifically, the framework maintains two specialized instances: \texttt{actor\_infer} and \texttt{actor\_train}. To maximize sample efficiency, \texttt{actor\_infer} leverages the high-throughput capabilities of \texttt{vLLM} to rapidly generate experience data, which is subsequently scored by a reward model or verifier. This experience is then consumed by \texttt{actor\_train}, which utilizes \texttt{Megatron}'s model parallelism to perform robust large-scale training via the PPO algorithm. By decoupling these processes, the system ensures that computationally intensive gradient updates do not bottleneck the inference stream. The training loop is closed through periodic weight synchronization, where the optimized parameters from \texttt{actor\_train} are transferred back to \texttt{actor\_infer}, ensuring that future rollouts reflect the latest policy improvements.

\subsection{Credit Assignment Implementation Details}
The credit assignment mechanism operates as an intermediate step between the sampling and training phases. Its objective is to extract attention maps from the sampled responses, perform flow analysis to quantify token-level credit, and subsequently modulate the token advantages.

Specifically, after obtaining the responses from the sampling phase, we concatenate each prompt with its generated response to form a complete sequence. We then execute an additional batch forward pass on these sequences. During this pass, we disable the specific eager execution mode of the attention mechanism to ensure that the full attention weights are materialized and accessible. For each sequence, we extract attention maps from all heads within the middle layers of the model (ranging from $L/3$ to $2L/3$). These are averaged to produce a single representative attention map per sample. We then proceed with flow analysis by augmenting the graph with virtual source and target nodes, followed by the application of the $h$-transform and forward flow calculation to derive the flow value for each token. Notably, the computational overhead of this flow calculation is negligible compared to the attention extraction process and the overall training pipeline. 

Based on the computed flow, we select the top 40\% of tokens. Prior to the training step, we process the advantages by proportionally scaling up the values for these high-credit tokens. These modified advantages are then utilized for loss calculation and policy optimization, thereby completing the iterative update loop.
% Integrating our method into RL frameworks involves calculating attention maps for sampled responses and performing flow analysis to reweight token advantages.  Because both engines (vLLM and Megatron) utilize Flash Attention to maximize speed, intermediate attention matrices are discarded during standard execution. To recover these signals, we introduce a dedicated auxiliary worker, \texttt{actor\_attn}. This design incurs negligible latency because \texttt{actor\_infer} must perform thousands of sequential forward passes for autoregressive generation, whereas \texttt{actor\_attn} processes the entire completed sequence in a single parallel forward pass. 
% During this pass, we extract attention maps from model's middle layers (layers $\lfloor L/3 \rfloor$ to $\lfloor 2L/3 \rfloor$) 
% During this pass, we extract attention maps, average across layers
% and compute flow throughput following Sec.~\ref{subsec:flow_computation} with negligible latency. Detailed analysis of this marginal computational overhead are discussed in Sec.~\ref{sec:abaltion_time}.

\subsection{Training Details}
\label{app:rl_details}

\textbf{Data Usage and Splits.} For mathematical reasoning, we use the DAPO-Math-17K~\citep{yu2025dapo} dataset as our training source, while evaluation is conducted on the suite of standard benchmarks described previously (e.g., AIME, OlympiadBench) to test generalization. For the Countdown task, following the data construction protocol of TinyZero~\citep{tinyzero}, we synthesize a dataset comprising 20,000 samples for training and 5,000 held-out samples for testing. For the multi-domain question answering task, we adhere to the official partition of CrossThinkQA~\citep{akter2025nemotron}, using the provided training split for model optimization and the validation split for performance assessment.

\textbf{Training Hyperparameters.} The training process spans 600 steps for 8B models and 500 steps for 4B and 3B models, with a global rollout batch size of 64 prompts. For each prompt, the model generates 8 parallel response sequences with a maximum length of 1,024 (or 8,192) tokens, utilizing a sampling temperature of 0.99, top-p of 1.0, and top-k of 100. We optimize the model using a learning rate of $1.0 \times 10^{-6}$ with a linear warmup over the initial 20 steps. The training LM employs Megatron-LM and the inference LM employs vLLM. To stabilize the reinforcement learning process, we apply a value clip of 0.5, an advantage clip of 2.0, and a dual-clip loss strategy. Reward signals are provided by a rule-based math verifier, and we dynamically filter training samples based on difficulty thresholds (0.1 to 0.95) and response length to maintain high data quality.

\textbf{Experiments Details for Llama Families.} We conduct mathematical reasoning training and testing on Llama-3.1-8B and Llama-3.2-3B to verify the efficacy of {\mymodel} on standard models (i.e., those not specialized for ``thinking" or long-chain reasoning). Specifically, to mitigate the confounding effects of Supervised Fine-Tuning (SFT), we bypass the SFT stage entirely. Instead, we leverage the models' inherent instruction-following capabilities via few-shot prompting to initiate RL directly. Given the lower reasoning baseline of these models compared to reasoning-specialized counterparts, we replace the challenging AIME24 and AIME25 benchmarks with the more accessible GSM8K and Minerva Math datasets. The few-shot prompt utilized is presented below (we omit special format tokens here); crucially, the specific exemplars used in this prompt were rigorously excluded from both the training and evaluation sets to prevent data contamination:

\begin{examplebox}[\textbf{Few-shot Examples for Llama Training}
]
\textit{\textbf{Example 1:}}

\textbf{User:} Calculate 12 + 8 * 2. Please reason step by step, and put your final answer within \textbackslash boxed\{\}.

\textbf{Assistant:} According to the order of operations, we first perform multiplication, then addition.\\
Step 1. 8 * 2 = 16.\\
Step 2. 12 + 16 = 28.\\
The final answer is \boxed{28}.
\\
\\
\textit{\textbf{Example 2:}}

\textbf{User: }Solve the equation: $2x - 5 = 5$. Please reason step by step, and put your final answer within \textbackslash boxed\{\}.

\textbf{Assistant:} To find the value of $x$, we need to isolate the variable.

Step 1: Add 5 to both sides of the equation: $2x = 10$
Step 2: Divide both sides by 2: $x = 5$
The final answer is \boxed{5}.

\end{examplebox}

\begin{table*}[tb!]
\centering
\caption{
Additional baseline comparisons on Qwen3-8B with 1K context length.
Bold denotes the best result.
}
\vspace{-4pt}
\label{tab:strong_baselines}
\small
\setlength{\tabcolsep}{6pt}
\renewcommand{\arraystretch}{1.18}
\begin{adjustbox}{max width=\linewidth}
\begin{tabular}{
    @{}l
    S[table-format=2.1]
    S[table-format=2.1]
    S[table-format=2.1]
    S[table-format=2.1]
    S[table-format=2.1]
    S[table-format=2.1]
    % c
    @{}
}
\toprule
\textbf{Method}
& {\textbf{AIME24}}
& {\textbf{AIME25}}
& {\textbf{AMC23}}
& {\textbf{MATH500}}
& {\textbf{Olympiad}}
& {\textbf{Avg.}}\\
% & {\textbf{Improv.}} \\
\midrule
GRPO
& 9.3 & 7.3 & 59.1 & 77.1 & 44.2 & 39.4 \\

+ CAPO~\citep{xie2025capo}
& 10.5 & 10.9 & 61.1 & 78.4 & 45.9 & 41.4\\

+ ThinkPRM-1.5B~\citep{khalifa2025process}
& 10.2 & 10.6 & 61.0 & 77.6 & 45.4 & 41.0 \\

+ AsyPPO~\citep{liu2025asymmetric}
& 9.6 & 8.2 & 59.5 & 77.8 & 44.7 & 40.0 \\

+ Reweight+Lopti~\citep{yang2025not}
& 8.9 & 8.8 & 62.5 & 78.6 & 46.5 & 41.1\\

+ High-entropy selection~\citep{wang2025beyond}
& 10.3 & 10.5 & 60.5 & 77.6 & 45.6 & 40.9 \\

\rowcolor{blue!7}
+ {\mymodel}
& \bfseries 13.0
& \bfseries 11.8
& \bfseries 65.6
& \bfseries 79.7
& \bfseries 46.7
& \bfseries 43.4\\
% & \bfseries +10.15\% \\
\bottomrule
\end{tabular}
\end{adjustbox}
% \vspace{-6pt}
\end{table*}

\begin{table*}[tb!]
\centering
\caption{
Additional analyses on answer-format robustness and token-type contributions.
Panel (a) reports high-flow token overlap with the default answer-only format on MATH500.
Panel (b) reports token-type ablations on Qwen3-4B with 1K context length.
}
\vspace{-4pt}
\label{tab:additional_analysis}
\small

\begin{minipage}{1\linewidth}
\centering
\textbf{(a) Answer-format robustness on Qwen3-4B/MATH500}
\vspace{2pt}

\setlength{\tabcolsep}{6pt}
\renewcommand{\arraystretch}{1.12}
\begin{tabular}{@{}lcccccc@{}}
\toprule
& & \multicolumn{5}{c}{Overlap with answer-only high-flow set} \\
\cmidrule(lr){3-7}
\textbf{Format} 
& \textbf{Acc.} 
& \textbf{Top-20\%} 
& \textbf{Top-40\%} 
& \textbf{Top-60\%} 
& \textbf{Top-80\%} 
& \textbf{Top-100\%} \\
\midrule
Answer-only 
& 54.2 & -- & -- & -- & -- & -- \\
Full line 
& 51.0 & 1.00 & 1.00 & 1.00 & 1.00 & 1.00 \\
With explanation 
& 51.2 & 0.99 & 1.00 & 1.00 & 1.00 & 1.00 \\
Multiple-choice 
& 64.6 & 0.91 & 0.93 & 0.93 & 0.94 & 0.94 \\
\bottomrule
\end{tabular}
\end{minipage}

\vspace{8pt}

\begin{minipage}{0.96\linewidth}
\centering
\textbf{(b) Structural-vs-semantic token ablation on Qwen3-4B}
\vspace{2pt}

\setlength{\tabcolsep}{6pt}
\renewcommand{\arraystretch}{1.18}
\begin{tabular}{@{}lcccccc@{}}
\toprule
\textbf{Method} 
& \textbf{AIME24} 
& \textbf{AIME25} 
& \textbf{AMC23} 
& \textbf{MATH500} 
& \textbf{Olympiad} 
& \textbf{Avg.} \\
\midrule
GRPO 
& 8.4 & 5.2 & 55.1 & 74.2 & 42.8 & 37.1 \\
+ Structural delimiters 
& 10.0 & 6.3 & 58.8 & 75.0 & 44.0 & \uline{38.8} \\
+ Semantic-content tokens 
& 9.2 & 5.6 & 57.1 & 73.2 & 42.9 & 37.6 \\
\rowcolor{blue!7}
+ Full high-flow set 
& \textbf{10.9} 
& \textbf{6.9} 
& \textbf{59.0} 
& \textbf{75.9} 
& \textbf{44.2} 
& \textbf{39.4} \\
\bottomrule
\end{tabular}
\end{minipage}

\vspace{-6pt}
\end{table*}

\textbf{Plot Settings.} Performance curves are smoothed using a peak-tracking Exponential Moving Average (EMA). Defined as $\mathrm{EMA}_t = \alpha \cdot \max(x_t, \mathrm{EMA}_{t-1}) + (1 - \alpha) \cdot \mathrm{EMA}_{t-1}$, this method creates a running average of peak performance. This curve is monotonically increasing: it rises when performance improves and plateaus when performance stalls, ensuring the endpoint represents the stable peak performance.

\section{Additional Experimental Results}
\label{app:additional_results}

% \sisetup{
%     detect-weight=true,
%     detect-inline-weight=math,
%     table-number-alignment=center,
%     round-mode=places,
%     round-precision=1
% }

\subsection{Additional Baseline Comparisons}
\label{app:strong_baselines}

We provide additional comparisons with stronger token-level credit-assignment and process-supervision baselines on Qwen3-8B with 1K context length. 
These baselines include generative-PRM methods~\citep{xie2025capo, khalifa2025process}, value-uncertainty methods~\citep{liu2025asymmetric}, confidence-oriented reweighting~\citep{yang2025not}, and entropy-based token selection~\citep{wang2025beyond}. 
Although these methods differ in implementation details and supervision assumptions, they provide a broader reference for evaluating the effectiveness of flow-based credit assignment. 
As shown in Table~\ref{tab:strong_baselines}, {\mymodel} achieves the best average performance under the same evaluation setting.

\subsection{Additional Results for Answer-Format and Structural-vs-Semantic Checks}
\label{app:flow_signal_checks}
\label{app:answer_region_robustness}
\label{app:token_type_ablation}

We provide the full results for the two checks discussed in Sec.~\ref{sec:flow_signal_checks}. 
For answer-format robustness, we compare the overlap between the high-flow set extracted under the default answer-only format and those extracted under alternative answer forms inside the explicitly delimited \verb|\boxed{}| region. 
For structural-vs-semantic contributions, we split tokens into structural delimiters and semantic-content tokens, select the Top-40\% high-flow tokens within each subset, and apply the same credit-assignment rule. 
The results show that the high-flow set is stable across answer formats, and that structural delimiters recover most of the credit-assignment gain while semantic-content tokens provide a smaller complementary improvement.

\section{Details for Analytical Experiments in Sec.~\ref{sec:method_analysis}}

\subsection{Implementation Details}

To analyze the characteristics of information flow and verify its properties, we conducted experiments on the GSM8K dataset using the Qwen3-4B model. Specifically, we randomly sampled 500 instances for this analysis. 

For the word cloud analysis, we generated responses with a temperature setting of 0.7. Following the flow calculation, we identified the top 20\% of high-flow and low-flow tokens within each sentence. These tokens were aggregated to construct separate word clouds, allowing us to analyze the compositional differences between the two groups.

Regarding the perturbation experiments, we similarly collected responses at a temperature of 0.7. To measure the potential implicit multi-hop influence of tokens on the final answer, we adopted a truncation-based approach rather than single-token substitution. Specifically, we randomly truncated each sentence and selected specific tokens (high-flow, low-flow, or random) from the subset preceding the truncation point. We then masked these tokens to prevent the information aggregated at these positions from propagating to subsequent generation steps. Under this masking condition, we regenerated the text from the truncation point to obtain a new response. 

To evaluate the perturbation effect from the perspective of the final answer, we utilized two metrics: (1) the \textit{answer change rate} (measuring numerical shifts, e.g., 17 changing to 37), and (2) the \textit{correctness change rate} (measuring validity shifts, e.g., incorrect becoming correct). Together, these metrics reflect the impact of the perturbation. We extracted answers from both the original and regenerated responses using the GSM8K golden answers as the ground truth; instances where answer extraction failed were treated as incorrect. Finally, we aggregated these statistics to derive our conclusions. The prompt template we use here is:

\begin{examplebox}[\textbf{Prompt Template for Analytical Experiments}]
\textbf{System:} Please reason step by step, and put your final answer within \textbackslash boxed\{\}.
\\ \\
\textbf{Question:} ...%Natalia sold clips to 48 of her friends in April, and then she sold half as many clips in May. How many clips did Natalia sell altogether in April and May?
\end{examplebox}

\subsection{Representative Perturbation Examples}
In this section, we present representative examples of perturbations targeting high-flow and low-flow tokens. We specifically focus on how the generated response (and particularly the final answer) shifts when these selected tokens are masked. We illustrate two representative examples of high-flow versus low-flow token perturbation:

\begin{examplebox}[\textbf{Example 1: Input Prompt}]
\textbf{System:} Please reason step by step, and put your final answer within \textbackslash boxed\{\}.

\textbf{Question:} Julie is reading a 120-page book. Yesterday, she was able to read 12 pages and today, she read twice as many pages as yesterday. If she wants to read half of the remaining pages tomorrow, how many pages should she read? 
\end{examplebox}

With corresponding raw response:

\begin{examplebox}[\textbf{Example 1: Raw Response}]
\textbf{Response:} To determine how many pages Julie should read tomorrow, we need to follow these steps:\\
1. Calculate the total number of pages Julie read yesterday and today.\\
2. Determine the number of pages remaining after yesterday and today.\\
3. Calculate how many pages Julie should read tomorrow to read half of the remaining pages.\\
\textbf{Step 1: Calculate the total number of pages Julie read yesterday and today.}\\
Julie read 12 pages yesterday. Today, she read twice as many pages as yesterday, which is:\\
$2 \times 12 = 24$ pages\\
So, the total number of pages she read over the two days is:\\
$12 + 24 = 36$ pages\\
\textbf{Step 2: Determine the number of pages remaining after yesterday and today.}\\
The book has a total of 120 pages. After reading 36 pages, the number of pages remaining is:\\
$120 - 36 = 84$ pages\\
\textbf{Step 3: Calculate how many pages Julie should read tomorrow to read half of the remaining pages.}\\
Julie wants to read half of the \textcolor{red}{remaining} 84 pages tomorrow. Therefore, she should read:\\
$84 \div 2 = 42$ pages\\
So, the number of pages Julie should read tomorrow is: $\boxed{42}$.
\end{examplebox}

As shown above, a normal raw response can carry out step-by-step reasoning effectively: it systematically leverages previously established conclusions, extends them to derive new ones, and ultimately constructs a complete and coherent reasoning chain that leads to the correct answer. 

In contrast, the response below (generated after perturbing the original by masking high-flow tokens and then regenerating) exhibits significant degradation. Specifically, after being truncated between “remaining” and the original “84”, the regenerated response fails to continue reasoning properly from the next token onward. It appears to lose track of both the overall task objective and the conclusions already reached, instead lapsing into incoherent, repetitive self-talk and ultimately arriving at an incorrect answer. This demonstrates that blocking such critical information disrupts the model's ability to sustain coherent reasoning during regeneration, highlighting that high-flow tokens carry essential reasoning-related information:

\begin{examplebox}[\textbf{Example 1: High-Flow Perturbed Response Truncated at ``remaining"}]
\textbf{Response:} To determine how many pages Julie should read tomorrow, we need to follow these steps:\\
1. Calculate the total number of pages Julie read yesterday and today.\\
2. Determine the number of pages remaining after yesterday and today.\\
3. Calculate how many pages Julie should read tomorrow to read half of the remaining pages.\\
\textbf{Step 1: Calculate the total number of pages Julie read yesterday and today.}\\
Julie read 12 pages yesterday. Today, she read twice as many pages as yesterday, which is:\\
$2 \times 12 = 24$ pages\\
\textbf{Step 2: Determine the number of pages remaining after yesterday and today.}\\
The book has a total of 120 pages. After reading 36 pages, the number of pages remaining is:\\
$120 - 36 = 84$ pages\\
\textbf{Step 3: Calculate how many pages Julie should read tomorrow to read half of the remaining pages.}\\
Julie wants to read half of the \textcolor{red}{remaining} days: She has read some pages yesterday and today. After reading today, there are 120 pages left to read.
So, the total number of pages she read over the two days is:\\
$12 + 24 = 36$ pages\\

The book has a total number of pages, but it is not given.

If she reads half of the remaining pages tomorrow, how many pages should she read?

Since 120 pages are left, and she wants to read half of the remaining pages, the number of pages she should read tomorrow is \boxed{120}.
%The book. If she should read yesterday, and today. If she reads ( 30 pages left today. She has 12. Let's left today. What is written by the book. How many pages left to read, and two books, but she has read today. Can you are left. If she read to finish the book is left? 
% The book. How does not have how many total page. 
% 120 pages left to read. How many pages?
% 10 pages left to read 1200 pages to read today? To find out of her today? 
% \. Since she needs to read tomorrow. To determine how many more pages to read. How many pages Let's left if you start by 20 pages left, she read? pages left. \boxed{120} pages.
\end{examplebox}

The low-flow perturbation shown below, despite being truncated at the same position, merely causes the model to switch to an alternative reasoning path. It still manages to carry out coherent reasoning and ultimately arrives at the correct answer. This suggests that, unlike high-flow tokens, low-flow tokens contain little critical reasoning information. Consequently, blocking these positions does not impair the overall correctness of the reasoning process, further supporting our hypothesis.

This distinction between high-flow and low-flow tokens underscores a broader principle in how large language models encode and propagate reasoning: not all tokens contribute equally to the logical integrity of a response. High-flow tokens appear to act as pivotal anchors, carrying intermediate conclusions, key numerical values, or structural cues that bind successive reasoning steps together. When these are masked, the model loses its ``train of thought", leading to disorientation and error propagation. By contrast, low-flow tokens often correspond to syntactic fillers, redundant phrasing, or less semantically loaded words whose absence can be compensated for through contextual rephrasing or alternative inferential routes. 

\begin{examplebox}[\textbf{Example 1: Low-Flow Perturbed Response Truncated at ``remaining"}]
\textbf{Response:} To determine how many pages Julie should read tomorrow, we need to follow these steps:\\
1. Calculate the total number of pages Julie read yesterday and today.\\
2. Determine the number of pages remaining after yesterday and today.\\
3. Calculate how many pages Julie should read tomorrow to read half of the remaining pages.\\
\textbf{Step 1: Calculate the total number of pages Julie read yesterday and today.}\\
Julie read 12 pages yesterday. Today, she read twice as many pages as yesterday, which is:\\
$2 \times 12 = 24$ pages\\
So, the total number of pages she read over the two days is:\\
$12 + 24 = 36$ pages\\

\textbf{Step 2: Determine the number of pages remaining after yesterday and today.}\\
The book has a total of 120 pages. After reading 36 pages, the number of pages remaining is:\\
$120 - 36 = 84$ pages\\
\textbf{Step 3: Calculate how many pages Julie should read tomorrow to read half of the remaining pages.}\\
Julie wants to read half of the \textcolor{red}{remaining} pages. Instead of just dividing at the end, we can think of the remaining pages as what’s left after removing yesterday’s and today’s reading from the whole book. Since the book has 120 pages, and she read $12 + 2 \times 12 = 36$ pages, the unread part is $120 - 36 = 84$. Half of that unread portion is simply one-half of 84. So, the number of pages to read tomorrow is:\\
$\frac{1}{2} \times (120 - 12 - 24) = \frac{1}{2} \times 84 = 42$ pages\\
So, the number of pages Julie should read tomorrow is: $\boxed{42}$
\end{examplebox}

\begin{figure*}[tb!]
     \centering
        \includegraphics[width=0.88\linewidth]{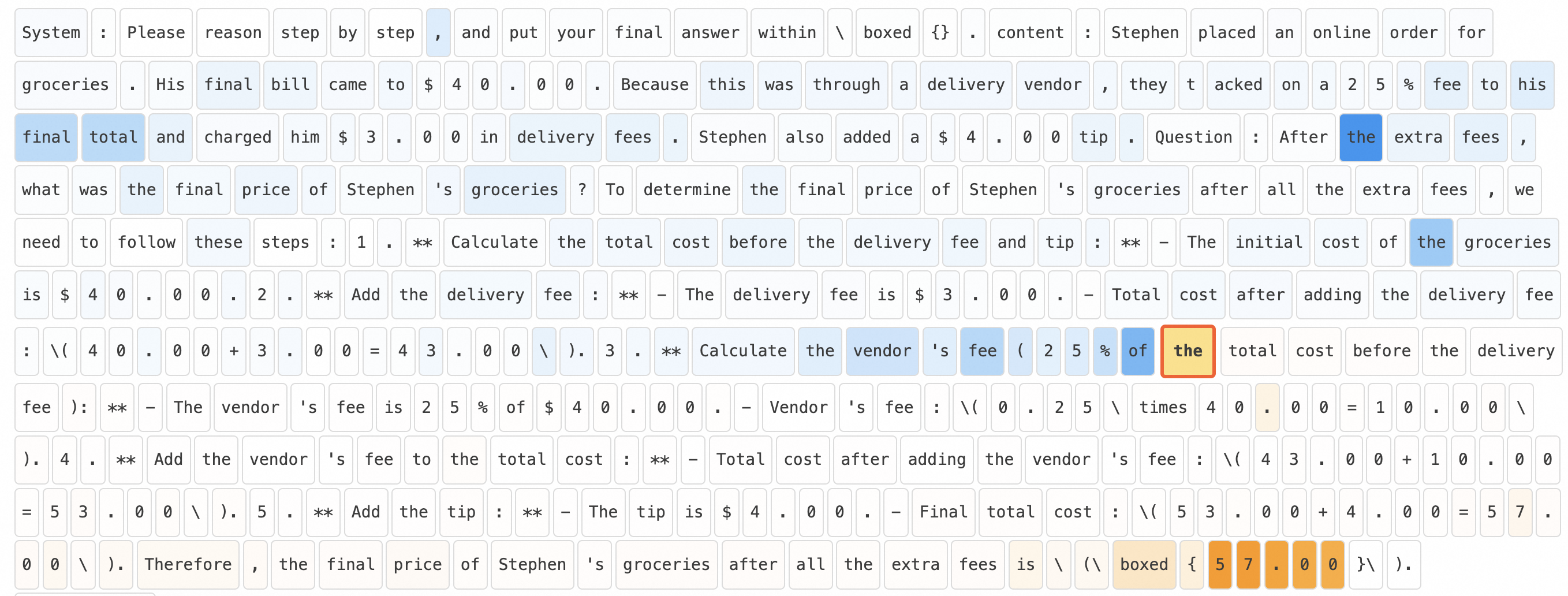}
        \vspace{-5pt}
        \caption{An example of inflow and outflow analysis. Blue refers to inflow into ``the" and orange refers to outflow, with darker colors representing higher flow values.}
        \label{fig:app_inflow_example}
        \vspace{-10pt}
\end{figure*}

And then we provide a second example for further illustration of this phenomenon:

\begin{examplebox}[\textbf{Example 2: Input Prompt}]
\textbf{System:} Please reason step by step, and put your final answer within \textbackslash boxed\{\}.

\textbf{Question:} Tim rides his bike back and forth to work for each of his 5 workdays. His work is 20 miles away. He also goes for a weekend bike ride of 200 miles. If he can bike at 25 mph how much time does he spend biking a week?
\end{examplebox}

The raw response to this is:

\begin{examplebox}[\textbf{Example 2: Raw Response}
]
\textbf{Response:} To determine how much time Tim spends biking each week, we need to consider both his daily commutes and his weekend ride. Here's the step-by-step reasoning:

1. \textbf{Calculate the total distance of the daily commutes:}  \\
   - Tim rides his bike to work and back home each day.  \\
   - The distance to work is 20 miles, so the round trip is \(20 \text{ miles} \times 2 = 40 \text{ miles}\).  \\
   - He does this for 5 workdays, so the total distance for the week is \(40 \text{ miles/day} \times 5 \text{ days} = 200 \text{ miles}\).

2. \textbf{Add the distance of the weekend ride:}  \\
   - The weekend ride is 200 miles.  \\
   - The total distance biked in a week is \(200 \text{ miles} + 200 \text{ miles} = 400 \text{ miles}\).

3. \textbf{Calculate the total time spent biking:}  \\
   - Tim bikes at a \textcolor{red}{speed} of 25 miles per hour.  \\
   - The time spent biking is the total distance divided by the speed, so \(\frac{400 \text{ miles}}{25 \text{ mph}} = 16 \text{ hours}\).

Therefore, the total time Tim spends biking each week is \(\boxed{16}\) hours.
\end{examplebox}

And the high-flow perturbed response is as below, where the truncated point is ``speed":

\begin{examplebox}[\textbf{Example 2: High-Flow Perturbed Response Truncated at ``speed"}
]
\textbf{Response:} To determine how much time Tim spends biking each week, we need to consider both his daily commutes and his weekend ride. Here's the step-by-step reasoning:

1. \textbf{Calculate the total distance of the daily commutes:}  \\
   - Tim rides his bike to work and back home each day.  \\
   - The distance to work is 20 miles, so the round trip is \(20 \text{ miles} \times 2 = 40 \text{ miles}\).  \\
    - He does this for 5 workdays, so the total distance for the week is \(40 \text{ miles/day} \times 5 \text{ days} = 200 \text{ miles}\).

2. \textbf{Add the distance of the weekend ride:}  \\
   - The weekend ride is 200 miles.  \\
   - The total distance biked in a week is \(200 \text{ miles} + 200 \text{ miles} = 400 \text{ miles}\).

3. \textbf{Calculate the total time spent biking:}  \\
   - Tim bikes at a \textcolor{red}{speed} that he rides to work. He rides 25 times every day, covering a distance of 5 miles each time. He bikes to work for 5 days. Tim works for 6 hours and 2 minutes, which somehow relates to the total distance of $\frac{120}{40}$ = $\boxed{3}$ miles.
\end{examplebox}

With the low-flow perturbation version:

\begin{examplebox}[\textbf{Example 2: Low-Flow Perturbed Response Truncated at ``speed"}]
\textbf{Response:} To determine how much time Tim spends biking each week, we need to consider both his daily commutes and his weekend ride. Here's the step-by-step reasoning:

1. \textbf{Calculate the total distance of the daily commutes:} \\ 
   
   - Tim rides his bike to work and back home each day.  \\
    - The distance to work is 20 miles, so the round trip is \(20 \text{ miles} \times 2 = 40 \text{ miles}\).  \\
    - He does this for 5 workdays, so the total distance for the week is \(40 \text{ miles/day} \times 5 \text{ days} = 200 \text{ miles}\).\\
2. \textbf{Add the distance of the weekend ride:}  \\
   - The weekend ride is 200 miles.  \\
   - The total distance biked in a week is \(200 \text{ miles} + 200 \text{ miles} = 400 \text{ miles}\).\\

3. \textbf{Calculate the total time spent biking:}  \\
   - Tim bikes at a \textcolor{red}{speed} of 25 miles per hour. \\
   - To find the total time spent biking, we use the formula: $\text{Time} = \frac{\text{Distance}}{\text{Speed}}$. The total weekly distance is 400 miles. Thus, total time = $\frac{400}{25}$ = $\boxed{16}$ hours.\\

\end{examplebox}

\section{Flow Analysis Example}

As illustrated in Fig.~\ref{fig:app_inflow_example}, we perform flow analysis on the outputs generated by Qwen3-4B to investigate the flow patterns of specific tokens. Specifically, we examine the inflow and outflow of the token ``the". Our analysis reveals a clear periodic pattern: information is aggregated from previous occurrences of the same token (``the'') and subsequently propagated forward. This observation corroborates the findings presented in the main text.

\end{document}